\newtheorem{lemma}{Lemma}
\newtheorem{definition}{Definition}
\DeclareMathOperator*{\argmax}{arg\,max}
\setlist[itemize]{noitemsep}
\setlist[enumerate]{noitemsep}
\newcommand{\returndists}{\mathscr{P}(\mathbb{R})^{\mathcal{X} \times \mathcal{A}}}
\newcommand{\calcd}{\mathrm{d}}
\begin{document}

%

%
\runningauthor{Mark Rowland, Marc G. Bellemare, Will Dabney, R\'emi Munos, Yee Whye Teh}

\twocolumn[
\aistatstitle{An Analysis of Categorical Distributional Reinforcement Learning}
\setcounter{footnote}{0}
\aistatsauthor{Mark Rowland\textsuperscript{*1} \ \ \ Marc G. Bellemare$^\dagger$ \ \ \ Will Dabney$^\ddagger$ \ \ \ R\'emi Munos$^\ddagger$ \ \ \ Yee Whye Teh$^\ddagger$}
\aistatsaddress{\textsuperscript{*}University of Cambridge, $^\dagger$Google Brain, $^\ddagger$DeepMind}
 ]

\begin{abstract}
  Distributional approaches to value-based reinforcement learning model the entire distribution of returns, rather than just their expected values, and have recently been shown to yield state-of-the-art empirical performance. This was demonstrated by the recently proposed \texttt{C51} algorithm, based on categorical distributional reinforcement learning (CDRL) \citep{DistPerspective}. However, the theoretical properties of CDRL algorithms are not yet well understood. In this paper, we introduce a framework to analyse CDRL algorithms, establish the importance of the projected distributional Bellman operator in distributional RL, draw fundamental connections between CDRL and the Cram\'er distance, and give a proof of convergence for sample-based categorical distributional reinforcement learning algorithms.
\end{abstract}

\section{INTRODUCTION}

Reinforcement learning (RL) formalises the problems of evaluation and optimisation of an agent's behaviour while interacting with an environment, based upon feedback given through a reward signal \citep{SuttonBarto}. A major paradigm for solving these problems is value-based RL, in which the agent predicts the \emph{expected return} -- i.e. the expected discounted sum of rewards -- in order to guide its behaviour. The moments or distribution of the random return have also been considered in the literature, with a variety of approaches proposing algorithms for estimating more complex distributional information \citep{ParametricReturnDensityEst,NonparametricReturnDensityEst,ACRiskSensitiveMDPs,VarOfReward}. Recently, \citet{DistPerspective} used the distributional perspective to propose an algorithm, \texttt{C51}, which achieved state-of-the-art performance on the Atari 2600 suite of benchmark tasks. \texttt{C51} is a deep RL algorithm based on categorical policy evaluation (for evaluation) and categorical Q-learning (for control), also introduced by \citet{DistPerspective}, and it is these latter two algorithms which are at the centre of our study. We refer to these approaches as categorical distributional reinforcement learning (CDRL).

Given a state $x$ and action $a$, \texttt{C51} approximates the distribution over returns using a uniform grid over a fixed range, i.e. a \emph{categorical} distribution with evenly-spaced outcomes. Analogous to how value-based approaches such as SARSA \citep{rummery94online} learn to predict, \texttt{C51} also forms a learning target from  sample transitions: reward, next state, and eventually next-state distribution over returns. However, the parallel ends here: because \texttt{C51} learns a distribution, it minimises the Kullback-Leibler divergence between its target and its prediction, rather than the usual squared loss. However, the support of the target is in general disjoint from the approximation support; to account for this, \citet{DistPerspective} further introduced a projection step normally absent from reinforcement learning algorithms.

As a whole, the particular techniques incorporated in \texttt{C51} are not explained by the accompanying theory.
While the ``mean process'' which governs learning within \texttt{C51} is described by a contractive distributional Bellman operator, there are not yet any guarantees on the behaviour of sample-based algorithms. To put things in context, such guarantees in case of estimating expected returns require a completely different mathematical formalism \citep{AsynchSAandQLearning,Jaakkola}.
The effect of the discrete approximation and its corresponding projection step also remain to be quantified. In this paper we analyse these issues.

At the centre of our analysis is the Cram\'er distance between probability distributions. The Cram\'er distance is of particular interest as it was recently shown to possess many of the same properties as the Wasserstein metric, used to show the contractive nature of the distributional Bellman operator \citep{CramerGAN}. Specifically, using the Cram\'er distance, we: (i) quantify the approximation error arising from the discrete approximation in CDRL (see Section \ref{sec:paramandproj}); and (ii) develop stochastic approximation results for the sample-based case (see Section \ref{sec:SA}).

One of the main contributions of this paper is to establish a 
framework for the analysis of CDRL algorithms. This framework reveals a space of possible alternative methods (Sections \ref{sec:c51} and \ref{sec:convergence}). We also demonstrate that the fundamental property required for the convergence of distributional RL algorithms is contractivity of a \emph{projected} Bellman operator, in addition to the contractivity of the Bellman operator itself as in non-distributional RL (Proposition \ref{prop:cat-contract}). This point has parallels with the importance of the (distinct) projection operator in non-tabular RL \citep{LinearTDAnalysis}.

We begin, in Section \ref{sec:background}, with a general introduction to distributional RL, and establish required notation. In Section \ref{sec:c51}, we give a detailed description of categorical distributional RL, and set it in the context of a new framework in which to view distributional RL algorithms. Finally, in Section \ref{sec:convergence}, we undertake a detailed convergence analysis of CDRL, dealing with the approximations and parametrisations that typically must be introduced into practical algorithms. This culminates in the first proofs of convergence for sample-based CDRL algorithms.

\section{BACKGROUND}\label{sec:background}

\subsection{Markov decision processes}
 We consider a Markov decision process (MDP) with a finite state space $\mathcal{X}$, a finite action space $\mathcal{A}$, and a transition kernel $p: \mathcal{X} \times \mathcal{A} \rightarrow \mathscr{P}(\mathbb{R} \times \mathcal{X})$ that defines a joint distribution over immediate reward and next state given a current state-action pair. We will be concerned with stationary policies $\pi : \mathcal{X} \rightarrow \mathscr{P}(\mathcal{A})$ that define a probability distribution over the action space given a current state. The full MDP is given by the collection of random variables $(X_t, A_t, R_t)_{t=0}^\infty$, where $(X_t)_{t \geq 0}$ is the sequence of states taken by the environment, $(A_t)_{t \geq 0}$ is the sequence of actions taken by the agent, and $(R_t)_{t \geq 0}$ is the sequence of rewards.
 
\subsection{Return distributions}
The \emph{return} of a policy $\pi$, starting in initial state $x \in \mathcal{X}$ and initially taking action $a \in \mathcal{A}$, is defined as the random variable given by the sum of discounted rewards:
\begin{align}\label{eq:returns}
\sum_{t=0}^\infty \gamma^t R_t \bigg| X_0 = x, A_0 = a \, ,
\end{align}
where $\gamma \in [0,1)$ is the discount factor. We may implicitly view the distribution of the returns as being parametrised by $\pi$ \citep{PG}. Two common tasks in RL are (i) \emph{evaluation}, in which the expected value of the return is sought for a fixed policy, and (ii) \emph{control}, in which a policy $\pi^*$ which maximises the expected value of the returns is sought.

In the remainder of this paper, we will write
the \emph{distribution} of the return of policy $\pi$ and initial state-action pair $(x, a) \in \mathcal{X} \times \mathcal{A}$ as
\begin{align}\label{eq:returndist}
\eta_\pi^{(x, a)} = \mathrm{Law}_\pi\left( \sum_{t=0}^\infty \gamma^t R_t\ \bigg| X_0 = x, A_0 = a \right) \, .
\end{align}
We write $\eta_\pi$ for the collection of distributions $(\eta_\pi^{(x, a)} | (x, a) \in \mathcal{X} \times \mathcal{A})$.
We highlight the change in emphasis from discussing random variables, as in \eqref{eq:returns}, to directly referring to probability distributions in their own right. 
Although \citet{DistPerspective} referred to the object $\eta_\pi$ as a \emph{value distribution}, here we favour the more technically correct name \emph{return distribution function}, to highlight that $\eta_\pi$ is a function mapping state-action pairs to probability distributions over returns. Referring to return distributions in their own right will lead to a clearer statement of the convergence results that appear in Section \ref{sec:convergence}.

\subsection{The distributional Bellman operator}\label{sec:distbellmanop}
It is well known that expected returns satisfy the Bellman equation \citep{Bellman1957,SuttonBarto}. \citet{DistPerspective} showed that the return distribution function $\eta_\pi$ satisfies a distributional variant of the Bellman equation. This result was phrased in terms of equality in distribution between random variables. A similar approach was taken by \citet{NonparametricReturnDensityEst}, in which cumulative distribution functions were used. To express the Bellman equation in terms of distributions themselves, we will need the notion of pushforward (or image) measures. We first recall the definition of these measures at the level of generality required by the development of our theory; see \citet{Bill86} for further details.

\begin{definition}
Given a probability distribution $\nu \in \mathscr{P}(\mathbb{R})$ and a measurable function $f : \mathbb{R} \rightarrow \mathbb{R}$, the pushforward measure $f_\# \nu \in \mathscr{P}(\mathbb{R})$ is defined by $f_\# \nu(A) = \nu(f^{-1}(A))$, for all Borel sets $A \subseteq \mathbb{R}$.
\end{definition}
 Intuitively, $f_\# \nu$ is obtained from $\nu$ by shifting the support of $\nu$ according to the map $f$. Of particular interest in this paper will be pushforward measures obtained via an affine shift map $f_{r, \gamma} : \mathbb{R} \rightarrow \mathbb{R}$, defined by $f_{r, \gamma}(x) = r + \gamma x$. Such transformations also appear, unnamed, in \citet{ParametricReturnDensityEst}.

Using this notation, we can now restate a fundamental result which was shown by \citet{DistPerspective} in the language of random variables.
The return distribution function $\eta_\pi$ associated with a policy $\pi$, defined in \eqref{eq:returndist}, satisfies the \emph{distributional Bellman equation}:
\[
   \eta_\pi^{(x, a)} = (\mathcal{T}^\pi \eta_\pi)^{(x,a)} \quad  \forall (x, a) \in \mathcal{X} \times \mathcal{A} \, ,
\]
where $\mathcal{T}^\pi : \returndists \rightarrow \returndists$ is the \emph{distributional Bellman operator}, defined by:
\begin{align}\label{eq:distBellmanOp}
& (\mathcal{T}^\pi \eta)^{(x,a)}  \\
 = & \int_{\mathbb{R}}\sum_{(x^\prime, a^\prime) \in \mathcal{X} \times \mathcal{A}} (f_{r, \gamma})_\# \eta^{(x^\prime, a^\prime)} \pi(a^\prime | x^\prime) p(\calcd r, x^\prime | x, a) \nonumber \, ,
\end{align}
for all $\eta \in \returndists$. This equation serves as the basis of distributional RL, just as the standard Bellman equation serves as the basis of non-distributional value-based RL. \citet{DistPerspective} established a preliminary theoretical result regarding the contractive properties of the operator $\mathcal{T}^\pi$. To further this analysis, we first require a particular notion of distance between collections of probability distributions, introduced in \citet{DistPerspective}.

\begin{definition}\label{def:wasserstein}
The $p$-Wasserstein distance $d_p$, for $p \geq 1$ is defined on $\mathscr{P}_p(\mathbb{R})$, the set of probability distributions with finite $p$\textsuperscript{th} moments, by:
\[
d_p(\nu_1, \nu_2) = \left(\inf_{\lambda \in \Lambda(\nu_1, \nu_2)} \int_{\mathbb{R}^2} |x - y|^p \lambda(\calcd x, \calcd y)  \right)^{1/p} \, , 
\]
for all $\nu_1, \nu_2 \in \mathscr{P}_p(\mathbb{R})$, where $\Lambda(\nu_1, \nu_2)$ is the set of probability distributions on $\mathbb{R}^2$ with marginals $\nu_1$ and $\nu_2$.

The supremum-$p$-Wasserstein metric $\overline{d}_p$ is defined on $\mathscr{P}_p(\mathbb{R})^{\mathcal{X} \times \mathcal{A}}$ by
\[
\overline{d}_p(\eta, \mu) = \sup_{(x, a) \in \mathcal{X} \times \mathcal{A}} d_p(\eta^{(x,a)}, \mu^{(x,a)}) \, ,
\]
for all $\eta, \mu \in \mathscr{P}_p(\mathbb{R})^{\mathcal{X} \times \mathcal{A}}$.
\end{definition}

With these definitions in hand, we may recall the following result.

\begin{lemma}[Lemma 3, \cite{DistPerspective}]\label{lem:distcontract}
The distributional Bellman operator $\mathcal{T}^\pi$ is a $\gamma$-contraction in $\overline{d}_p$, for all $p\geq1$. Further, we have, for any initial set of distributions $\eta \in \returndists$:
\[
(\mathcal{T}^\pi)^m \eta \rightarrow \eta_\pi \text{ in } \overline{d}_p , \text{ as } m \rightarrow \infty \, .
\]
\end{lemma}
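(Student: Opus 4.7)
The plan is to decompose the action of $\mathcal{T}^\pi$ into two elementary operations---the affine pushforward $(f_{r,\gamma})_\#$ and a mixture over the transition and policy distributions---analyse how $d_p$ interacts with each, and then assemble a bound on $\overline{d}_p(\mathcal{T}^\pi \eta, \mathcal{T}^\pi \mu)$. Convergence to $\eta_\pi$ then follows from the Banach fixed point theorem, using that $\eta_\pi$ itself solves the distributional Bellman equation.

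The first ingredient is that for any $\nu_1, \nu_2 \in \mathscr{P}_p(\mathbb{R})$ and any $r \in \mathbb{R}$, one has $d_p((f_{r,\gamma})_\# \nu_1, (f_{r,\gamma})_\# \nu_2) = \gamma\, d_p(\nu_1, \nu_2)$. This is immediate from the coupling definition: if $\lambda \in \Lambda(\nu_1,\nu_2)$ is optimal, then the pushforward of $\lambda$ under $(x,y) \mapsto (r+\gamma x, r+\gamma y)$ is a coupling of the translated measures with cost exactly $\gamma^p \int |x-y|^p \,\lambda(\mathrm{d}x,\mathrm{d}y)$, and the reverse inequality follows by pushing optimal couplings backwards through the invertible map.

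The second ingredient is convexity of $d_p^p$ under mixtures: if $\{\nu_i\}, \{\mu_i\}$ are families in $\mathscr{P}_p(\mathbb{R})$ and $(\alpha_i)$ is a probability vector (or more generally a probability measure on the index set), then
\[
d_p^p\!\left(\textstyle\sum_i \alpha_i \nu_i,\ \sum_i \alpha_i \mu_i\right) \leq \sum_i \alpha_i\, d_p^p(\nu_i, \mu_i) \, .
\]
This is proven by taking optimal couplings $\lambda_i \in \Lambda(\nu_i, \mu_i)$ and observing that $\sum_i \alpha_i \lambda_i$ is a coupling of the two mixtures whose transport cost is exactly the right-hand side; the same argument works verbatim when the mixture is over the joint distribution of $(R_0, X_1, A_1)$ given $(X_0,A_1)=(x,a)$ appearing in the definition of $\mathcal{T}^\pi$.

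Combining these, for each $(x,a)$,
\[
d_p^p\!\bigl((\mathcal{T}^\pi\eta)^{(x,a)}, (\mathcal{T}^\pi\mu)^{(x,a)}\bigr)
\leq \int_{\mathbb{R}}\!\sum_{x',a'} \pi(a'|x')\, p(\mathrm{d}r, x'|x,a)\, \gamma^p\, d_p^p\bigl(\eta^{(x',a')}, \mu^{(x',a')}\bigr)
\leq \gamma^p\, \overline{d}_p(\eta,\mu)^p \, .
\]
Taking $p$-th roots and the supremum over $(x,a)$ gives $\overline{d}_p(\mathcal{T}^\pi\eta, \mathcal{T}^\pi\mu) \leq \gamma\, \overline{d}_p(\eta,\mu)$. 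For the convergence claim, the space $(\mathscr{P}_p(\mathbb{R})^{\mathcal{X}\times\mathcal{A}}, \overline{d}_p)$ is a complete metric space (since $\mathcal{X}\times\mathcal{A}$ is finite and $(\mathscr{P}_p(\mathbb{R}),d_p)$ is complete), $\eta_\pi$ is a fixed point of $\mathcal{T}^\pi$ by the distributional Bellman equation, and $\mathcal{T}^\pi$ is a strict contraction, so Banach's fixed point theorem yields $(\mathcal{T}^\pi)^m \eta \to \eta_\pi$ in $\overline{d}_p$ from any starting $\eta$. The main subtlety to watch is ensuring that all relevant measures lie in $\mathscr{P}_p(\mathbb{R})$, which reduces to a uniform bound on the $p$-th moment under the standard assumption of bounded (or at least $p$-integrable) rewards; given that, the argument above is routine.
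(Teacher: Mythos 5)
Your argument is correct: the scaling identity for $d_p$ under the affine pushforward $(f_{r,\gamma})_\#$, convexity of $d_p^p$ under mixtures via mixed couplings, and the Banach fixed point theorem together give exactly the claimed $\gamma$-contraction and convergence, and your caveat about restricting to $\mathscr{P}_p(\mathbb{R})$ (bounded or $p$-integrable rewards) is the right one. The paper itself imports this lemma from \citet{DistPerspective} without reproving it, but your two-ingredient decomposition is the standard proof and is structurally identical to the argument the paper does give for the analogous $\sqrt{\gamma}$-contraction of $\mathcal{T}^\pi$ in $\overline{\ell}_2$ (Jensen's inequality for the mixture plus the pushforward scaling identity), so no further comparison is needed.
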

This motivates \emph{distributional} RL algorithms, which attempt to approximately find $\eta_\pi$
by taking some initial estimates of the return distributions $\eta_0 = (\eta_0^{(x, a)} | (x, a) \in \mathcal{X}\times\mathcal{A})$, and iteratively computing a sequence of estimates $(\eta_t)_{t \geq0}$ by approximating the update step
\begin{align}\label{eq:idealDistBellmanBackup}
\eta_{t+1} \leftarrow \mathcal{T}^\pi \eta_t  \text{\ \ \ for\ } t=0,1,\ldots\ \, .
\end{align}
There is also a control version of these updates, which seeks to find the return distributions associated with an optimal policy $\pi^*$, via the following updates
\begin{align}\label{eq:idealDistControl}
\eta_{t+1} \leftarrow \mathcal{T} \eta_t  \text{\ \ \ for\ } t=0,1,\ldots\ \, ,
\end{align}
where $\mathcal{T}$ is the control version of the distributional Bellman operator, defined by
\begin{align*}
 (\mathcal{T} \eta)^{(x,a)}\!
 = &\! \int_{\mathbb{R}}\!\sum_{(x^\prime, a^\prime) \in \mathcal{X} \times \mathcal{A}}\!\!\! (f_{r, \gamma})_\# \eta^{(x^\prime, a^*(x^\prime))} p(\calcd r, x^\prime | x, a) \nonumber \, , \\
 & \quad\text{where\ } a^*(x^\prime) \in \argmax_{a^\prime \in \mathcal{A}} \mathbb{E}_{R \sim \eta^{(x^\prime, a^\prime)}}\left\lbrack R \right\rbrack \, .
\end{align*}
An ideal policy evaluation algorithm would iteratively compute the exact updates of \eqref{eq:idealDistBellmanBackup}, and inherit the resulting convergence guarantees from Lemma \ref{lem:distcontract}. However, full computation of the distributional Bellman operator on a return distribution function is typically either impossible (due to unknown MDP dynamics), or computationally infeasible \citep{NDP}. In order to take the full updates in \eqref{eq:idealDistBellmanBackup} or \eqref{eq:idealDistControl} and produce a practical, scalable distributional RL algorithm, several key approximations are required, namely:
\begin{enumerate}[label=(\roman*),nosep]
	\item \emph{distribution parametrisation};
	\item \emph{stochastic approximation} of the Bellman operator;
	\item \emph{projection} of the Bellman target distribution; 
	\item \emph{gradient updates} via a loss function.
\end{enumerate}
We discuss each of these approximations in Section \ref{sec:c51}, at the same time describing our two CDRL algorithms, categorical policy evaluation and categorical Q-learning, in detail with this approximation framework in mind.

\section{CATEGORICAL POLICY EVALUATION AND CATEGORICAL Q-LEARNING}\label{sec:c51}

\begin{figure*}[ht]
\begin{center}
\includegraphics[width=\textwidth]{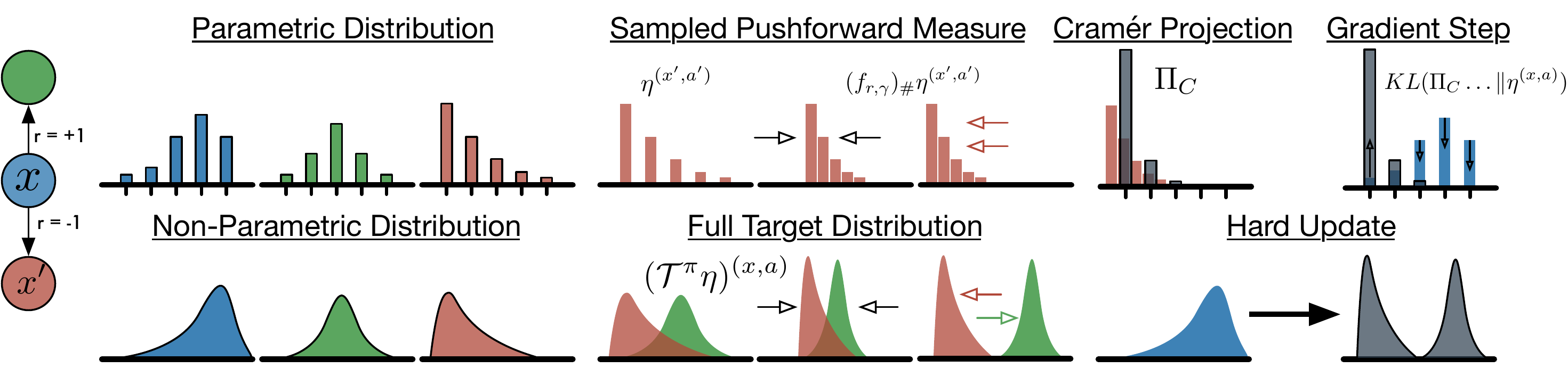}
\end{center}
\caption{A 3-state MDP with a single action available at each state (shown far left), with full update scheme \eqref{eq:idealDistBellmanBackup} illustrated on the bottom row, and the categorical policy evaluation update scheme illustrated on the top row. For both update schemes, the current return distribution function is illustrated on the left, the computation of the target distribution in the centre, and finally the update is shown on the right.\label{fig:c51}}
\end{figure*}

Our first contribution is to make explicit the various approximations, parametrisations, and assumptions implicit in CDRL algorithms.
Categorical poicy evaluation approximates the update scheme \eqref{eq:idealDistBellmanBackup}; it produces an iterative sequence $(\eta_t)_{t \geq 0}$ of approximate return distribution functions, updating the approximations as shown in Algorithm \ref{alg:c51}. Figure \ref{fig:c51} illustrates the salient points of the algorithm, and contrasts them against the full updates of \eqref{eq:idealDistBellmanBackup}. Algorithm \ref{alg:c51} also describes categorical Q-learning, which approximates the full updates in \eqref{eq:idealDistControl}. We now discuss the structure of Algorithm \ref{alg:c51} in more detail, with reference to the distributional RL framework introduced at the end of Section \ref{sec:distbellmanop}.

\begin{algorithm}
	\caption{CDRL update \citep{DistPerspective}}\label{alg:c51}
	\begin{algorithmic}[1]
		\REQUIRE $\eta_t^{(x, a)} = \sum_{k=1}^K p^{(x, a)}_{t, k} \delta_{z_k}$ for each $(x, a)$
		\STATE Sample transition $(x_t, a_t, r_t, x_{t+1})$ \label{algoline:SAstart}
		\STATE \textcolor{gray}{\# Compute distributional Bellman target}
		\IF{Categorical policy evaluation}
		\STATE $a^* \sim \pi(\cdot | x_{t+1})$
		\ELSIF{Categorical Q-learning}
		\STATE $a^* \leftarrow \argmax_a \mathbb{E}_{R \sim \eta_t^{(x_{t+1}, a)}}[R]$
		\ENDIF
		\STATE $\widehat{\eta}^{(x_t, a_t)}_* \leftarrow (f_{r_t, \gamma})_\# \eta_t^{(x_{t+1}, a^*)}$ \label{algoline:SAend}
		\STATE \textcolor{gray}{\# Project target onto support}
		\STATE $\widehat{\eta}^{(x_t, a_t)}_t \leftarrow \Pi_\mathcal{C} \widehat{\eta}_*^{(x_t, a_t)}$ \label{algoline:projection}
		\STATE \textcolor{gray}{\# Compute KL Loss}
		\STATE Find gradient of $\text{KL}(\widehat{\eta}^{(x_t, a_t)}_t || \eta_t^{(x_t, a_t)})$ \label{algoline:klgrad}
		\STATE Use gradient to generate new estimate ${\eta_{t+1}^{(x_t, a_t)} = \sum_{k=1}^K p^{(x_t, a_t)}_{t+1, k} \delta_{z_k}}$ \label{algoline:dist1}
		\RETURN $\eta_{t+1}^{(x, a)} = \sum_{k=1}^K p^{(x, a)}_{t+1, k} \delta_{z_k}$ for each $(x, a)$ \label{algoline:dist2}
	\end{algorithmic}
\end{algorithm}

\subsection{Distribution parametrisation}\label{sec:distributionparam}
From an algorithmic perspective, it is impossible to represent the full space of probability distributions $\mathscr{P}(\mathbb{R})$ with a finite collection of parameters.
Therefore a first design decision for a general distributional RL algorithm is how probability distributions should be represented in an approximate way.
Formally, this requires the selection of a parametric family $\mathcal{P} \subset \mathscr{P}(\mathbb{R})$. CDRL uses the parametric family
\[
\mathcal{P} = \left\{ \sum_{i=1}^K p_i \delta_{z_i} \bigg| p_1, \ldots, p_K \geq 0 \, , \sum_{k=1}^K p_k = 1 \right\} \, ,
\]
of \emph{categorical} distributions over some fixed set of equally-spaced supports $z_1 < \cdots < z_K$); see lines \ref{algoline:dist1} and \ref{algoline:dist2} of Algorithm \ref{alg:c51}. Other parametrisations are of course possible, such as mixtures of Diracs with varying location parameters \citep{QR}, mixtures of Gaussians, etc.

\subsection{Stochastic approximation of Bellman operator}\label{sec:SAOfBellman}
Evaluation of the distributional Bellman operator $\mathcal{T}^\pi$ (see \eqref{eq:distBellmanOp}) requires integrating over all possible next state-action-reward combinations. Some approximation is required; a popular way to achieve this in RL is by \emph{sampling} a transition $(x_t, a_t, r_t, x_{t+1}, a^*)$ of the MDP.
This is also the approach taken in CDRL, as shown in lines \ref{algoline:SAstart}-\ref{algoline:SAend} of Algorithm \ref{alg:c51}. Here $a^*$ is selected either by sampling from the policy $\pi(\cdot|x_{t+1})$ in the case of categorical policy evaluation, or as the action with the highest estimated expected returns, in the case of categorical Q-learning. In the context of categorical policy evaluation, this defines a stochastic Bellman operator, given by
\begin{align}\label{eq:stocBellman}
(\widehat{\mathcal{T}}^\pi \eta_t)^{(x_t, a_t)} &= (f_{r_t, \gamma})_\# \eta_t^{(x_{t+1}, a^*)} \, , \\
(\widehat{\mathcal{T}}^\pi \eta_t)^{(x, a)} &= \eta_t^{(x, a)} \qquad \text{\ if\ } (x, a) \not= (x_t, a_t) \, , \nonumber
\end{align}
where the randomness in $\widehat{\mathcal{T}}^\pi$ comes from the randomly sampled transition $(x_t, a_t, r_t, x_{t+1}, a^*)$. Note that this defines a \emph{random measure}, and importantly, this random measure is equal \emph{in expectation} to the true Bellman target $(\mathcal{T}^\pi \eta_t)^{(x_t, a_t)}$.

\subsection{Projection of Bellman target distribution}\label{sec:proj}
Having computed $(\widehat{\mathcal{T}}^\pi \eta_t)^{(x_t, a_t)}$, this new distribution typically no longer lies in the parametric family $\mathcal{P}$; as shown in \eqref{eq:stocBellman}, the supports of the distributions are transformed by an affine map $f_{r, \gamma}$. We therefore require a method of mapping the backup distribution function into the parametric family. That is, we require a \emph{projection operator} $\Pi : \mathscr{P}(\mathbb{R}) \rightarrow \mathcal{P}$ that may be applied to each real-valued distribution in a return distribution function. CDRL uses the heuristic projection operator $\Pi_\mathcal{C}$ (see line \ref{algoline:projection} of Algorithm \ref{alg:c51}), which was defined by \citet{DistPerspective} as follows for single Dirac measures:
\begin{align}\label{eq:cramerprojdiracs}
\Pi_\mathcal{C} (\delta_{y}) = \begin{cases}
\delta_{z_1} &  y \leq z_1 \\
\frac{z_{i+1} - y}{z_{i+1} - z_i} \delta_{z_i} + \frac{y - z_i}{z_{i+1} - z_i} \delta_{z_{i+1}} &  z_i < y \leq z_{i+1} \\
\delta_{z_K} &  y > z_K
\end{cases}
\, ,
\end{align}
and extended affinely to finite mixtures of Dirac measures, so that for a mixture of Diracs $\sum_{i=1}^N p_i \delta_{y_i}$, we have $\Pi_\mathcal{C}(\sum_{i=1}^N p_i \delta_{y_i}) = \sum_{i=1}^N p_i \Pi_\mathcal{C} (\delta_{y_i})$ - see the right-hand side of Figure \ref{fig:c51}. In general we will abuse notation, and use $\Pi_\mathcal{C}$ to denote the projection operator for individual distributions, and also the operator on return distribution functions $\mathscr{P}(\mathbb{R})^{\mathcal{X} \times \mathcal{A}} \rightarrow \mathcal{P}^{\mathcal{X} \times \mathcal{A}}$, which applies the former projection to each distribution in the return distribution function.

\subsection{Gradient updates}\label{sec:grad}
Having computed a stochastic approximation $\widehat{\eta}_t^{(x_t, a_t)} = (\Pi_\mathcal{C} \widehat{\mathcal{T}}^\pi \eta_t)^{(x_t, a_t)}$ to the full target distribution, the remaining issue is how the next iterate $\eta_{t+1}$ should be defined. In \texttt{C51}, the approach is to perform a single step of gradient descent on the Kullback-Leibler divergence of the prediction $\eta_t^{(x_t, a_t)}$ from the target $\widehat{\eta}_t^{(x_t, a_t)}$:
\begin{align}\label{eq:klobjective}
\text{KL}(\widehat{\eta}^{(x_t, a_t)}_t || \eta_t^{(x_t, a_t)}) \, ,
\end{align}
with respect to the parameters of $\eta_t^{(x_t, a_t)}$ - see line \ref{algoline:klgrad} of Algorithm \ref{alg:c51}. We also consider CDRL algorithms based on a mixture update, described in more detail in Section \ref{sec:SA}. The use of a gradient update, rather than a ``hard'' update allows for the dissipation of noise introduced in the target by stochastic approximation \citep{NDP,SAandRAandA}.
This completes the description of CDRL in the context of the framework introduced at the end of Section \ref{sec:distbellmanop}; we now move on to discussing the convergence properties of these algorithms.

\section{CONVERGENCE ANALYSIS}\label{sec:convergence}

The approximations, parametrisations, and heuristics of CDRL discussed in Section \ref{sec:c51} yield practical, scalable algorithms for evaluation and control, but the effects of these heuristics on the theoretical guarantees that many non-distributional algorithms enjoy have not yet been addressed. In this section, we set out a variety of theoretical results for CDRL algorithms, and in doing so, emphasise several key ways in which the approximations described in Section \ref{sec:c51} must fit together to enjoy good theoretical guarantees.

We begin by drawing a connection between the heuristic projection operator $\Pi_\mathcal{C}$ and the Cram\'er distance in Section \ref{sec:cramergeometry}. This connection then paves the way to obtaining the results of Section \ref{sec:paramandproj}, which concern the properties of CDRL policy evaluation algorithms without stochastic approximation and gradient updates, observing only the consequences of the parametrisation and projection steps discussed in Sections \ref{sec:distributionparam} and \ref{sec:proj}. We then bring these more realistic assumptions into play in Section \ref{sec:SA}, and our analysis culminates in a proof of convergence of categorical policy evaluation and categorical Q-learning in the tabular setting.

\subsection{Cram\'er geometry}\label{sec:cramergeometry}

We begin by recalling Lemma \ref{lem:distcontract}, through which \citet{DistPerspective} established that repeated application of the distributional Bellman operator $\mathcal{T}^\pi$ to an initial return distribution function guarantees convergence to the true set of return distributions in the supremum-Wasserstein metric.
However, once we introduce the parametrisation $\mathcal{P}$ and projection operator $\Pi_\mathcal{C}$ of categorica policy evaluation, the operator of concern is now $\Pi_\mathcal{C} \mathcal{T}^\pi$, the composition of the Bellman operator $\mathcal{T}^\pi$ with the projection operator $\Pi_\mathcal{C}$. Our first result illustrates that the presence of the projection operator is enough to break the contractivity under Wasserstein distances.

\begin{restatable}{lemma}{CramProjNotWNonExpansion}\label{lem:CramProjNotWNonExpansion}
The operator $\Pi_\mathcal{C}\mathcal{T}^\pi$ is in general not a contraction in $\overline{d}_p$, for $p > 1$.
\end{restatable}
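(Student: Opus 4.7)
The plan is to construct a counterexample, and in fact I would aim for one strong enough to rule out every contraction modulus in $[0,1)$: for any $p>1$ the ratio $\overline{d}_p(\Pi_\mathcal{C}\mathcal{T}^\pi\eta,\Pi_\mathcal{C}\mathcal{T}^\pi\mu)/\overline{d}_p(\eta,\mu)$ can be made arbitrarily large.

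First, I would take the trivial MDP with a single state $x$, a single action $a$, deterministic self-transition, and reward identically zero, for any fixed $\gamma\in[0,1)$. On this MDP $\mathcal{T}^\pi$ reduces to the pure scaling $(f_{0,\gamma})_\#$. I would then fix the symmetric three-point categorical support $\{z_1,z_2,z_3\}=\{-1,0,1\}$ and, for a parameter $y\in(0,1/2)$, consider the two return distribution functions $\eta^{(x,a)}=\delta_{y}$ and $\mu^{(x,a)}=\delta_{-y}$. Applying $\mathcal{T}^\pi$ yields the Diracs $\delta_{\pm\gamma y}$; since $0<\gamma y<1/2$, the projection rule \eqref{eq:cramerprojdiracs} gives
\[
(\Pi_\mathcal{C}\mathcal{T}^\pi\eta)^{(x,a)}=(1-\gamma y)\delta_0+\gamma y\,\delta_1,\qquad (\Pi_\mathcal{C}\mathcal{T}^\pi\mu)^{(x,a)}=\gamma y\,\delta_{-1}+(1-\gamma y)\delta_0.
\]

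Next I would compute both distances via the inverse-CDF formula for $d_p$ on $\mathbb{R}$. Direct bookkeeping yields $\overline{d}_p(\eta,\mu)=2y$ and, using that for $\gamma y\le 1/2$ the monotone rearrangement transports each of the two tails (mass $\gamma y$ at $\pm 1$ on each side) across a displacement of $1$, $\overline{d}_p(\Pi_\mathcal{C}\mathcal{T}^\pi\eta,\Pi_\mathcal{C}\mathcal{T}^\pi\mu)^p=2\gamma y$. Their ratio is therefore
\[
\frac{(2\gamma y)^{1/p}}{2y}=\frac{(2\gamma)^{1/p}}{2\,y^{1-1/p}},
\]
and since $1-1/p>0$ for any $p>1$, this diverges to $\infty$ as $y\to 0^+$. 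In particular no $c\in[0,1)$ can upper bound it. (Note that for $p=1$ the same formula recovers the modulus $\gamma$, consistent with $\Pi_\mathcal{C}$ being a non-expansion in $d_1$.)

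The only step that is not routine arithmetic is verifying that the coupling used to compute $\overline{d}_p$ above is optimal, which is immediate from the standard one-dimensional fact that the monotone (quantile) coupling attains the $p$-Wasserstein infimum on $\mathbb{R}$. Conceptually the example isolates why contractivity fails: $\Pi_\mathcal{C}$ can turn a tiny shift of a concentrated Dirac into a mixture whose support points sit an order-one gap apart, and for $p>1$ the $p$-th power cost of this forced spreading grows faster than the original displacement, overwhelming the $\gamma$-shrinkage provided by $\mathcal{T}^\pi$.
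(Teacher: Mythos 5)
Your proof is correct, and it rests on the same mechanism as the paper's: two nearby Diracs straddling a grid cell whose projections spread mass to grid points a distance $1$ apart, which $d_p$ for $p>1$ penalises superlinearly in the transported mass. The difference is in how far each argument is carried. The paper's proof only exhibits the projection step acting as an expansion on a fixed pair ($\delta_{1/4},\delta_{3/4}$ with grid $\{0,1\}$, expansion factor $2^{1-1/p}$) and declares this ``enough,'' leaving implicit the step of composing with $\mathcal{T}^\pi$ and choosing $\gamma$ large enough that $\gamma\,2^{1-1/p}>1$. You instead build an explicit one-state MDP on which $\mathcal{T}^\pi$ is exactly the scaling $(f_{0,\gamma})_\#$, verify the monotone-coupling computation $d_p^p=2\gamma y$ versus $d_p(\eta,\mu)=2y$, and let $y\to 0^+$. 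This buys a strictly stronger conclusion: for every $p>1$ and every $\gamma\in(0,1)$ the ratio is unbounded, so $\Pi_\mathcal{C}\mathcal{T}^\pi$ is not merely ``not a $\gamma$-contraction'' but fails to be Lipschitz in $\overline{d}_p$ at all; it also closes the small logical gap in the paper's ``it is enough to show $\Pi_\mathcal{C}$ expands'' shortcut, which on its own would not rule out contractivity for small $\gamma$. Your aside that $p=1$ recovers modulus $\gamma$ correctly matches the paper's remark that $\overline{d}_1$-contractivity is preserved.
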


Whilst contractivity with respect to $\overline{d}_1$ is in fact maintained, as we shall see there is a much more natural metric, the Cram\'er distance \citep{CramerDist}, with which to establish contractivity of the combined operator $\Pi_\mathcal{C} \mathcal{T}^\pi$.

\begin{definition}\label{def:cramer}
The Cram\'er distance $\ell_2$ between two distributions $\nu_1, \nu_2 \in \mathscr{P}(\mathbb{R})$, with cumulative distribution functions $F_{\nu_1}, F_{\nu_2}$ respectively, is defined by:
\[
\ell_2(\nu_1, \nu_2) = \left( \int_\mathbb{R} (F_{\nu_1}(x) - F_{\nu_2}(x))^2 \calcd x \right)^{1/2} \, .
\]
Further, the supremum-Cram\'er metric $\overline{\ell}_2$ is defined between two distribution functions $ \eta, \mu \in \mathscr{P}(\mathbb{R})^{\mathcal{X} \times \mathcal{A}}$ by
\[
\overline{\ell}_2(\eta, \mu) = \sup_{(x, a) \in \mathcal{X} \times \mathcal{A}} \ell_2(\eta^{(x,a)}, \mu^{(x,a)}) \, .
\]
\end{definition}

The Cram\'er distance was recently studied as an alternative to the Wasserstein distances in the context of generative modelling \citep{CramerGAN}. The Cram\'er distance, in fact, induces a useful geometric structure on the space of probability measures. We use this to provide a new interpretation of the heuristic projection $\Pi_\mathcal{C}$ intimately connected with the Cram\'er distance. The salient points of this connection are stated in Proposition \ref{prop:projnonexpansion}, with full mathematical details provided in the corresponding proof in the appendix. We then use this in Section \ref{sec:paramandproj} to show that $\Pi_\mathcal{C} \mathcal{T}^\pi$ is a contraction in $\overline{\ell}_2$.

\begin{restatable}{proposition}{projnonexpansion}\label{prop:projnonexpansion}
The Cram\'er metric $\ell_2$ endows a particular subset of $\mathscr{P}(\mathbb{R})$ with a notion of orthogonal projection, and the orthogonal projection onto the subset $\mathcal{P}$ is exactly the heuristic projection $\Pi_\mathcal{C}$.  Consequently, $\Pi_\mathcal{C}$ is a non-expansion with respect to $\ell_2$.
\end{restatable}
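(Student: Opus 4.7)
The plan is to realise the Cramér distance as the norm on an affine subset of a Hilbert space, at which point the statements of the proposition follow from classical Hilbert-space facts. Concretely, fix any reference categorical distribution $\mu_0 \in \mathcal{P}$ and consider the map $\Phi : \nu \mapsto F_\nu - F_{\mu_0}$, which sends every $\nu \in \mathscr{P}(\mathbb{R})$ with $\ell_2(\nu, \mu_0) < \infty$ into $L^2(\mathbb{R}, \calcd x)$. By the definition of $\ell_2$, the Cramér distance between two such distributions becomes exactly the $L^2$-distance between their images under $\Phi$. This supplies the notion of orthogonal projection referred to in the statement: projection onto any closed convex subset of $\Phi(\mathscr{P}(\mathbb{R}))$ is well-defined and is automatically a non-expansion in the ambient Hilbert norm, hence in $\ell_2$.

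I would then check that $\Phi(\mathcal{P})$ is a closed convex subset of $L^2(\mathbb{R})$. Convexity is immediate because $\Phi$ is affine and $\mathcal{P}$ itself is convex (the simplex over $z_1, \ldots, z_K$), while closedness follows from finite-dimensionality. The Hilbert projection theorem then provides a unique metric projection $P$ onto $\Phi(\mathcal{P})$, and the non-expansion property is automatic; the remaining task is to identify $\Phi^{-1} \circ P \circ \Phi$ with the heuristic operator $\Pi_\mathcal{C}$ defined in \eqref{eq:cramerprojdiracs}.

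To match the two, I would write a candidate $\mu = \sum_{k=1}^K p_k \delta_{z_k}$ and note that $F_\mu$ is piecewise constant on the intervals $(z_i, z_{i+1})$, pinned to $0$ below $z_1$ and to $1$ at or above $z_K$. The $L^2$-minimisation $\min_\mu \int (F_\mu - F_\nu)^2 \calcd x$ decouples across these intervals, and on each $(z_i, z_{i+1})$ the unconstrained optimal constant level is the average $\frac{1}{z_{i+1}-z_i}\int_{z_i}^{z_{i+1}} F_\nu(x)\calcd x$. Because $F_\nu$ is non-decreasing, these averages are automatically non-decreasing, so the constraints $p_k \geq 0$ are never active, and the projection therefore depends affinely on $F_\nu$. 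Evaluating this minimiser on $\nu = \delta_y$ for $z_i < y \leq z_{i+1}$ recovers precisely the three-case formula \eqref{eq:cramerprojdiracs}, with the endpoint clipping emerging from the same computation when $y$ lies outside $[z_1, z_K]$. Affine linearity in $F_\nu$ then extends the identification to arbitrary finite mixtures of Diracs, which matches how $\Pi_\mathcal{C}$ was defined in the first place.

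The main obstacle is the bookkeeping for the Hilbert embedding: CDFs themselves do not lie in $L^2(\mathbb{R})$, so one must subtract a reference CDF and verify that convexity of $\Phi(\mathcal{P})$, the form of the orthogonal projection, and its equivalence with $\Pi_\mathcal{C}$ are independent of the chosen reference. A secondary technicality is checking that the affine extension used in defining $\Pi_\mathcal{C}$ on mixtures truly agrees with the $L^2$ projection; this reduces to the observation that the monotonicity constraints are inactive, so the projection is affine in $F_\nu$, and the affine extension from Diracs transfers automatically.
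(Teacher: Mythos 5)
Your proposal is correct and follows essentially the same route as the paper: realise $\ell_2$ as the Hilbert norm on (translated) CDFs in $L^2(\mathbb{R})$, characterise the orthogonal projection onto grid-supported distributions by interval-wise averaging of $F_\nu$, match this with $\Pi_\mathcal{C}$ on Diracs via affinity, and deduce non-expansiveness from the Hilbert projection theorem. The only cosmetic difference is that the paper projects onto the closed \emph{affine subspace} of total-mass-one signed measures supported on $\{z_1,\ldots,z_K\}$ and checks monotonicity of the resulting CDF a posteriori, whereas you project onto the convex set $\mathcal{P}$ and observe that the positivity constraints are inactive --- the two arguments are equivalent.
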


\begin{figure}
	\centering
	\includegraphics[keepaspectratio, width=0.42\textwidth]{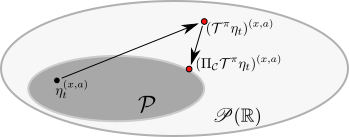}
	\caption{An illustration of the composition of the distributional Bellman operator with the projection $\Pi_\mathcal{C}$, interpreting probability distributions as points in an affine Hilbert space.}
	\label{fig:hilbertspaceproj}
\end{figure}

A consequence of the result above is the following, which will be useful in later sections.

\begin{restatable}[Pythagorean theorem]{lemma}{pythag}\label{lem:pythag}
Let $\mu \in \mathscr{P}([z_1, z_K])$, and let $\nu \in \mathscr{P}(\{z_1, \ldots, z_K\})$. Then
\[
\ell_2^2(\mu, \nu) = \ell_2^2(\mu, \Pi_\mathcal{C} \mu) + \ell_2^2(\Pi_\mathcal{C} \mu, \nu) \, .
\]
\end{restatable}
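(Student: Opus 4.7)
The plan is to reduce to the standard Hilbert-space Pythagorean identity, using Proposition \ref{prop:projnonexpansion} to identify $\Pi_\mathcal{C}$ with the orthogonal projection onto $\mathcal{P}$ under the Cram\'er geometry. Writing $\ell_2^2(\mu,\nu) = \int_{\mathbb{R}} (F_\mu - F_\nu)^2 \calcd x$ and decomposing $F_\mu - F_\nu = (F_\mu - F_{\Pi_\mathcal{C}\mu}) + (F_{\Pi_\mathcal{C}\mu} - F_\nu)$, expanding the square produces the two desired squared-distance terms plus a cross term
\[
2\int_{\mathbb{R}} (F_\mu - F_{\Pi_\mathcal{C}\mu})(x)\,(F_{\Pi_\mathcal{C}\mu} - F_\nu)(x) \calcd x,
\]
so the whole task reduces to showing this cross term vanishes.

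Since both $\Pi_\mathcal{C}\mu$ and $\nu$ lie in $\mathcal{P}$, hence are supported on $\{z_1,\ldots,z_K\}$, the function $F_{\Pi_\mathcal{C}\mu} - F_\nu$ is a step function: constant on each interval $[z_i, z_{i+1})$ for $i=1,\ldots,K-1$ and vanishing outside $[z_1, z_K)$. The cross-term integral therefore splits as a sum over grid cells, and it suffices to establish the average-preservation identity
\[
\int_{z_i}^{z_{i+1}} F_\mu(x) \calcd x = \int_{z_i}^{z_{i+1}} F_{\Pi_\mathcal{C}\mu}(x) \calcd x, \qquad i=1,\ldots,K-1.
\]
This is the CDF-level expression of the orthogonality already asserted in Proposition \ref{prop:projnonexpansion}, and the only quantitative work left in the proof.

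I would verify the identity in two steps. First, for a single Dirac $\mu = \delta_y$ with $z_i < y \leq z_{i+1}$, a direct computation shows both sides equal $z_{i+1} - y$: the left side because $F_{\delta_y}(x) = \mathbf{1}[x \geq y]$, and the right side from the two-atom expression \eqref{eq:cramerprojdiracs}, which places mass $(z_{i+1}-y)/(z_{i+1}-z_i)$ at $z_i$ and yields CDF value $(z_{i+1}-y)/(z_{i+1}-z_i)$ on $[z_i, z_{i+1})$. For $y$ outside $(z_i, z_{i+1}]$ the two integrals trivially agree. Second, for general $\mu \in \mathscr{P}([z_1, z_K])$, I write $\mu = \int \delta_y \calcd \mu(y)$, so that both $F_\mu$ and $F_{\Pi_\mathcal{C}\mu}$ are $\mu$-mixtures of the corresponding Dirac CDFs by the affine extension of $\Pi_\mathcal{C}$, and then apply Fubini to interchange the $\mu$-integral with the Lebesgue integral on $[z_i, z_{i+1}]$; integrability is immediate since all CDFs are bounded by $1$ on a bounded interval.

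The main obstacle, and really the only non-routine point, is the Dirac computation together with the Fubini interchange that lifts it to arbitrary $\mu$. Once the average-preservation identity is in hand, the cross term vanishes cell-by-cell and the Pythagorean identity drops out immediately.
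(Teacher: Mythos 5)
Your proof is correct and follows essentially the same route as the paper: decompose $F_\mu - F_\nu$ through $F_{\Pi_\mathcal{C}\mu}$, expand the square, and kill the cross term cell-by-cell using that $F_{\Pi_\mathcal{C}\mu}$ is constant on each $(z_i,z_{i+1})$ and equal to the average of $F_\mu$ there. The only difference is that the paper cites its Proposition~\ref{prop:characteriseproj} for this average-preservation property, whereas you re-derive it from the single-Dirac formula~\eqref{eq:cramerprojdiracs} plus a Fubini/mixture argument, which is a valid (if slightly redundant) substitute.
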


A geometric illustration of the action of the composed operator $\Pi_\mathcal{C} \mathcal{T}^\pi$ is given in Figure \ref{fig:hilbertspaceproj}, in light of the interpretation of $\Pi_\mathcal{C}$ as an orthogonal projection.

\subsection{Parametrisation and projection}\label{sec:paramandproj}
Having established these tools, we can now prove contractivity of the operator $\Pi_\mathcal{C} \mathcal{T}^\pi$, and hence convergence of this variant of distributional RL in the absence of stochastic approximation.

\begin{restatable}{proposition}{catcontract}\label{prop:cat-contract}
The operator $\Pi_\mathcal{C}\mathcal{T}^\pi$ is a $\sqrt{\gamma}$-contraction in $\overline{\ell}_2$. Further, there is a unique distribution function $\eta_\mathcal{C} \in \mathcal{P}^{\mathcal{X} \times \mathcal{A}}$ such that given any initial distribution function $\eta_0 \in \returndists$, we have
\[
(\Pi_\mathcal{C} \mathcal{T}^\pi)^m\eta_0 \rightarrow \eta_\mathcal{C} \text{\ in \ } \overline{\ell}_2 \text{ as } m \rightarrow \infty \, .
\]
\end{restatable}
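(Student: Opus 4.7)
The plan is to decompose the operator $\Pi_\mathcal{C}\mathcal{T}^\pi$ and bound each piece in the Cram\'er metric. Proposition \ref{prop:projnonexpansion} already gives non-expansion of $\Pi_\mathcal{C}$, so the main task is showing that $\mathcal{T}^\pi$ itself is a $\sqrt{\gamma}$-contraction in $\overline{\ell}_2$. For this I would establish two elementary properties of $\ell_2$. First, a \emph{scaling identity}: if $f_{r,\gamma}(x)=r+\gamma x$, then the CDF of $(f_{r,\gamma})_\#\nu$ is $F_\nu((x-r)/\gamma)$, so the substitution $u=(x-r)/\gamma$ in the defining integral yields $\ell_2((f_{r,\gamma})_\#\nu_1,(f_{r,\gamma})_\#\nu_2)=\sqrt{\gamma}\,\ell_2(\nu_1,\nu_2)$. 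Second, a \emph{convexity} (Jensen-type) bound for squared Cram\'er distance: because CDFs are linear in the measure and $t\mapsto t^2$ is convex, for any probability weights $\{\lambda_i\}$,
\[
\ell_2^2\!\Bigl(\textstyle\sum_i\lambda_i\nu_i^1,\ \sum_i\lambda_i\nu_i^2\Bigr)\ \le\ \sum_i\lambda_i\,\ell_2^2(\nu_i^1,\nu_i^2),
\]
and the same inequality extends to integrals via Fubini.

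Combining these with the definition \eqref{eq:distBellmanOp} of $\mathcal{T}^\pi$ as a mixture of pushforwards, I would estimate, for each fixed $(x,a)$,
\begin{align*}
\ell_2^2\bigl((\mathcal{T}^\pi\eta)^{(x,a)},(\mathcal{T}^\pi\mu)^{(x,a)}\bigr)
&\le \int_\mathbb{R}\!\sum_{x',a'}\ell_2^2\bigl((f_{r,\gamma})_\#\eta^{(x',a')},(f_{r,\gamma})_\#\mu^{(x',a')}\bigr)\pi(a'|x')p(\calcd r,x'|x,a)\\
&= \gamma\int_\mathbb{R}\!\sum_{x',a'}\ell_2^2\bigl(\eta^{(x',a')},\mu^{(x',a')}\bigr)\pi(a'|x')p(\calcd r,x'|x,a)\\
&\le \gamma\,\overline{\ell}_2^{\,2}(\eta,\mu).
\end{align*}
Taking the supremum over $(x,a)$ and square-rooting gives $\overline{\ell}_2(\mathcal{T}^\pi\eta,\mathcal{T}^\pi\mu)\le\sqrt{\gamma}\,\overline{\ell}_2(\eta,\mu)$. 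Combining with Proposition \ref{prop:projnonexpansion} applied coordinate-wise yields $\overline{\ell}_2(\Pi_\mathcal{C}\mathcal{T}^\pi\eta,\Pi_\mathcal{C}\mathcal{T}^\pi\mu)\le\sqrt{\gamma}\,\overline{\ell}_2(\eta,\mu)$, establishing the contraction claim.

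For existence, uniqueness, and convergence of iterates, I would observe that $\Pi_\mathcal{C}\mathcal{T}^\pi$ maps $\returndists$ into $\mathcal{P}^{\mathcal{X}\times\mathcal{A}}$, and that the latter is a closed, bounded subset of a finite-dimensional affine space (parametrised by the probability vectors $(p_k)_{k=1}^K$ at each state-action pair), hence complete under $\overline{\ell}_2$. The Banach fixed point theorem applied to $\Pi_\mathcal{C}\mathcal{T}^\pi$ restricted to $\mathcal{P}^{\mathcal{X}\times\mathcal{A}}$ therefore produces a unique $\eta_\mathcal{C}$ with $\Pi_\mathcal{C}\mathcal{T}^\pi\eta_\mathcal{C}=\eta_\mathcal{C}$, and iteration from any starting point in $\mathcal{P}^{\mathcal{X}\times\mathcal{A}}$ converges to it at rate $\sqrt{\gamma}^{\,m}$. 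For a general $\eta_0\in\returndists$, one application lands in $\mathcal{P}^{\mathcal{X}\times\mathcal{A}}$ and the same convergence applies from step one onward.

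The main obstacle I expect is the scaling identity for the pushforward by $f_{r,\gamma}$ and, more subtly, ensuring the convexity bound is applied to mixtures arising from the joint reward/next-state distribution $p(\calcd r,x'|x,a)$ (i.e., getting Fubini/Jensen through the reward integral cleanly). A secondary technicality is verifying that $\ell_2$ is finite on the relevant domain so that the bounds above are not vacuous; this is automatic once one has entered $\mathcal{P}^{\mathcal{X}\times\mathcal{A}}$ (bounded support), so the argument needs to be framed to use the contraction only after the first projection step if $\eta_0$ is allowed to be arbitrary.
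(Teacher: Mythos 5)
Your proposal is correct and follows essentially the same route as the paper's proof: Jensen's inequality applied to the mixture structure of $\mathcal{T}^\pi$, the $\sqrt{\gamma}$ scaling identity for pushforwards under $f_{r,\gamma}$ via substitution in the CDF integral, non-expansiveness of $\Pi_\mathcal{C}$ from Proposition \ref{prop:projnonexpansion}, and the Banach fixed point theorem. Your extra care about completeness of $\mathcal{P}^{\mathcal{X}\times\mathcal{A}}$ and finiteness of $\ell_2$ after the first projection is a welcome tightening of a point the paper leaves implicit, but it does not change the argument.
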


A natural question to ask is how the limiting distribution function $\eta_\mathcal{C}$, established in Proposition \ref{prop:cat-contract}, differs from the true distribution function $\eta_\pi$. In some sense, this quantifies the ``cost'' of using the parametrisation $\mathcal{P}$ rather than learning fully non-parametric probability distributions. Reusing the interpretation of $\Pi_\mathcal{C}$ as an orthogonal projection, and using a geometric series argument, we may establish the following result, which echoes existing results for linear function approximation \citep{LinearTDAnalysis}.

\begin{restatable}{proposition}{cramerdistetac}\label{prop:cramerdist-etaC}
Let $\eta_\mathcal{C}$ be the limiting return distribution function of Proposition \ref{prop:cat-contract}. If $\eta^{(x, a)}_\pi$ is supported on $[z_1, z_K]$ for all $(x, a) \in \mathcal{X} \times \mathcal{A}$, then we have:
\[
\overline{\ell}_2^2(\eta_\mathcal{C}, \eta_\pi)  \leq \frac{1}{1 - \gamma} \max_{1 \leq i < K} (z_{i+1} - z_i)  \, .
\]
\end{restatable}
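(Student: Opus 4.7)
The plan is a classical approximation-error-plus-contraction decomposition, carried out in the Cram\'er/Hilbert geometry identified by Proposition~\ref{prop:projnonexpansion} and Lemma~\ref{lem:pythag}. I will split $\overline{\ell}_2^2(\eta_\pi, \eta_\mathcal{C})$ into a one-step projection error plus a contractive remainder, then bound the projection error pointwise by $\max_i(z_{i+1}-z_i)$.

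First, for each $(x,a)$, the distribution $\eta_\pi^{(x,a)}$ is supported on $[z_1, z_K]$ by hypothesis and $\eta_\mathcal{C}^{(x,a)} \in \mathcal{P}$ by Proposition~\ref{prop:cat-contract}, so Lemma~\ref{lem:pythag} yields
\[
\ell_2^2(\eta_\pi^{(x,a)}, \eta_\mathcal{C}^{(x,a)}) = \ell_2^2(\eta_\pi^{(x,a)}, \Pi_\mathcal{C}\eta_\pi^{(x,a)}) + \ell_2^2(\Pi_\mathcal{C}\eta_\pi^{(x,a)}, \eta_\mathcal{C}^{(x,a)}).
\]
Taking suprema and using $\sup(A+B) \leq \sup A + \sup B$ gives $\overline{\ell}_2^2(\eta_\pi, \eta_\mathcal{C}) \leq \overline{\ell}_2^2(\eta_\pi, \Pi_\mathcal{C}\eta_\pi) + \overline{\ell}_2^2(\Pi_\mathcal{C}\eta_\pi, \eta_\mathcal{C})$. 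Using the fixed-point identities $\eta_\pi = \mathcal{T}^\pi \eta_\pi$ (Lemma~\ref{lem:distcontract}) and $\eta_\mathcal{C} = \Pi_\mathcal{C}\mathcal{T}^\pi \eta_\mathcal{C}$ (Proposition~\ref{prop:cat-contract}), the second term becomes $\overline{\ell}_2^2(\Pi_\mathcal{C}\mathcal{T}^\pi \eta_\pi, \Pi_\mathcal{C}\mathcal{T}^\pi \eta_\mathcal{C}) \leq \gamma\,\overline{\ell}_2^2(\eta_\pi, \eta_\mathcal{C})$, where I square the $\sqrt{\gamma}$-contractivity of Proposition~\ref{prop:cat-contract}. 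Rearranging yields $\overline{\ell}_2^2(\eta_\pi, \eta_\mathcal{C}) \leq (1-\gamma)^{-1}\,\overline{\ell}_2^2(\eta_\pi, \Pi_\mathcal{C}\eta_\pi)$, which is the ``geometric series'' reduction alluded to in the preamble.

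It remains to show the one-step bound $\overline{\ell}_2^2(\eta_\pi, \Pi_\mathcal{C}\eta_\pi) \leq \max_i(z_{i+1}-z_i)$. I would first dispatch a single Dirac: for $y \in [z_i, z_{i+1}]$, a direct CDF calculation shows $F_{\delta_y} - F_{\Pi_\mathcal{C}\delta_y}$ is piecewise constant, supported on $[z_i, z_{i+1}]$, and bounded in absolute value by $1$, so $\ell_2^2(\delta_y, \Pi_\mathcal{C}\delta_y) \leq z_{i+1}-z_i$. For an arbitrary $\mu \in \mathscr{P}([z_1, z_K])$, linearity of $\Pi_\mathcal{C}$ together with the Hilbert-space view of $\ell_2$ (Proposition~\ref{prop:projnonexpansion}) lets me write $F_\mu - F_{\Pi_\mathcal{C}\mu} = \int (F_{\delta_y} - F_{\Pi_\mathcal{C}\delta_y})\, d\mu(y)$ in $L^2(\mathbb{R})$; Jensen's inequality applied to the convex functional $g \mapsto \|g\|_{L^2}^2$ then yields $\ell_2^2(\mu, \Pi_\mathcal{C}\mu) \leq \int \ell_2^2(\delta_y, \Pi_\mathcal{C}\delta_y)\, d\mu(y) \leq \max_i(z_{i+1}-z_i)$. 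Specialising to $\mu = \eta_\pi^{(x,a)}$, taking a supremum, and combining with the previous display completes the proof.

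The main obstacle is the Jensen reduction in the last paragraph: it depends on recognising $\ell_2$ as a genuine Hilbert norm (set up in Proposition~\ref{prop:projnonexpansion}) and on integrability of the $L^2$-valued map $y \mapsto F_{\delta_y} - F_{\Pi_\mathcal{C}\delta_y}$, which is however piecewise linear in $y$ on each $[z_i, z_{i+1}]$ and raises no genuine difficulty. A more pedestrian alternative avoids Jensen by computing $\ell_2^2(\mu, \Pi_\mathcal{C}\mu)$ interval-by-interval, exploiting that $F_{\Pi_\mathcal{C}\mu}$ is constant on each $(z_i, z_{i+1})$; this yields the same (in fact a slightly tighter) bound at the cost of more bookkeeping at the grid points.
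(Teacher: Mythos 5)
Your proposal is correct and follows essentially the same route as the paper: Pythagorean decomposition via Lemma~\ref{lem:pythag}, the fixed-point identities $\eta_\pi = \mathcal{T}^\pi\eta_\pi$ and $\eta_\mathcal{C} = \Pi_\mathcal{C}\mathcal{T}^\pi\eta_\mathcal{C}$, the squared $\sqrt{\gamma}$-contraction, and rearrangement to isolate $\tfrac{1}{1-\gamma}\overline{\ell}_2^2(\Pi_\mathcal{C}\eta_\pi,\eta_\pi)$. The only (valid) deviation is in the one-step projection bound, where you use a Dirac-plus-Jensen argument while the paper computes $\ell_2^2(\Pi_\mathcal{C}\eta_\pi^{(x,a)},\eta_\pi^{(x,a)})$ interval-by-interval via the averaging characterisation of Proposition~\ref{prop:characteriseproj} --- precisely the ``pedestrian alternative'' you mention at the end.
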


This establishes that as the fineness of the grid $\{z_1, \ldots, z_K\}$ increases, we gradually recover the true return distribution function. The bound in Proposition \ref{prop:cramerdist-etaC} relies on a guarantee that the support of the true return distributions lie in the interval $[z_1, z_K]$. Many RL problems come with such a guarantee, but there are also many circumstances where \emph{a priori} knowledge of the scale of rewards is unavailable. It is possible to modify the proof of Proposition \ref{prop:cramerdist-etaC} to deal with this situation too.

\begin{restatable}{proposition}{cramerdistmisspecified}\label{prop:cramerdist-etaC_misspecified}
Let $\eta_\mathcal{C}$ be the limiting return distribution function of Proposition \ref{prop:cat-contract}. Suppose $\eta^{(x, a)}_\pi$ is supported on an interval $[z_1 - \delta, z_K + \delta]$ containing $[z_1, z_K]$ for each $(x, a) \in \mathcal{X} \times \mathcal{A}$, and $\eta^{(x, a)}_\pi( [z_1-\delta, z_1] \cup [z_K, z_K + \delta]) \leq q$ for some $q \in \mathbb{R}$ and for all $(x, a) \in \mathcal{X} \times \mathcal{A}$ -- $q$ bounds the excess mass lying outside the region $[z_1, z_K]$. Then we have
\begin{align*}
\overline{\ell}_2^2(\eta_\mathcal{C}, \eta_\pi)
\leq \frac{1}{1 - \gamma}\left( \max_{1 \leq i < K} (z_{i+1} - z_i) + 2q^2\delta \right) \, .
\end{align*}
\end{restatable}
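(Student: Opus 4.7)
The plan is to follow the argument underlying Proposition \ref{prop:cramerdist-etaC} and modify only the step in which the support assumption enters. The common backbone is the TD-style approximation bound
\[
\overline{\ell}_2^2(\eta_\pi, \eta_\mathcal{C}) \leq \frac{1}{1-\gamma}\,\overline{\ell}_2^2(\eta_\pi, \Pi_\mathcal{C}\eta_\pi),
\]
which follows by applying the Pythagorean identity pointwise in $(x,a)$ with $\mu = \eta_\pi^{(x,a)}$ and $\nu = \eta_\mathcal{C}^{(x,a)} \in \mathcal{P}$, taking suprema, and then using $\eta_\mathcal{C} = \Pi_\mathcal{C}\mathcal{T}^\pi \eta_\mathcal{C}$ and $\Pi_\mathcal{C}\eta_\pi = \Pi_\mathcal{C}\mathcal{T}^\pi \eta_\pi$ together with the $\sqrt{\gamma}$-contractivity from Proposition \ref{prop:cat-contract} to bound the second Pythagorean summand by $\gamma\,\overline{\ell}_2^2(\eta_\pi, \eta_\mathcal{C})$, before rearranging. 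Lemma \ref{lem:pythag} is stated for $\mu \in \mathscr{P}([z_1, z_K])$, but the orthogonal-projection characterization of $\Pi_\mathcal{C}$ (Proposition \ref{prop:projnonexpansion}) extends the identity to any $\mu \in \mathscr{P}(\mathbb{R})$: $F_{\Pi_\mathcal{C}\mu} - F_\nu$ is a piecewise-constant function on the grid partition that vanishes outside $[z_1, z_K]$, while $\int_{z_i}^{z_{i+1}}(F_\mu - F_{\Pi_\mathcal{C}\mu})\,\calcd x = 0$ on each grid cell, so the Pythagorean cross term vanishes.

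With the backbone in place, the only remaining task is to bound $\ell_2^2(\mu, \Pi_\mathcal{C}\mu)$ for $\mu = \eta_\pi^{(x,a)}$ supported on $[z_1 - \delta, z_K + \delta]$ with mass at most $q$ outside $[z_1, z_K]$. I would split the integral $\int(F_\mu - F_{\Pi_\mathcal{C}\mu})^2\,\calcd x$ into the three pieces over $[z_1 - \delta, z_1]$, $[z_1, z_K]$, and $[z_K, z_K + \delta]$. On the left exterior, $F_{\Pi_\mathcal{C}\mu}$ vanishes while $F_\mu(x) \leq \mu([z_1 - \delta, z_1]) \leq q$, giving a contribution of at most $q^2 \delta$; symmetrically, on the right exterior, $F_{\Pi_\mathcal{C}\mu} \equiv 1$ and $1 - F_\mu(x) \leq q$, giving another $q^2 \delta$. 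These two pieces account for the $2 q^2 \delta$ term.

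For the interior, decompose $\mu = (1 - q')\mu_I + \mu_{\text{ext}}$, where $\mu_I$ is the normalized restriction of $\mu$ to $[z_1, z_K]$ and $q' = \mu(\mathbb{R} \setminus [z_1, z_K]) \leq q$. A short bookkeeping computation shows that on $[z_1, z_K)$ the exterior contributions to $F_\mu$ and to $F_{\Pi_\mathcal{C}\mu}$ agree (both equal the left-exterior mass, which the projection sends entirely to $z_1$), so $F_\mu - F_{\Pi_\mathcal{C}\mu} = (1 - q')(F_{\mu_I} - F_{\Pi_\mathcal{C}\mu_I})$ on this interval. The same estimate that yields the well-specified interior bound in Proposition \ref{prop:cramerdist-etaC} gives $\ell_2^2(\mu_I, \Pi_\mathcal{C}\mu_I) \leq \max_{1 \leq i < K}(z_{i+1} - z_i)$, so the interior piece contributes at most $\max_{1 \leq i < K}(z_{i+1} - z_i)$. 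Summing the three bounds, taking the supremum over $(x,a)$, and combining with the TD-style inequality yields the claim. The main technical care will be in verifying the exterior cancellation in the interior CDF difference; the other steps are routine.
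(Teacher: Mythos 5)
Your proposal is correct and follows essentially the same route as the paper: the same $\tfrac{1}{1-\gamma}$ backbone inequality obtained from the Pythagorean identity and $\sqrt{\gamma}$-contractivity, and the same three-piece split of $\ell_2^2(\eta_\pi^{(x,a)}, \Pi_\mathcal{C}\eta_\pi^{(x,a)})$ into two exterior strips each contributing at most $q^2\delta$ plus an interior piece bounded by $\max_{1\leq i<K}(z_{i+1}-z_i)$. Your explicit verification that the Pythagorean identity of Lemma \ref{lem:pythag} extends beyond $\mathscr{P}([z_1,z_K])$ is a point the paper passes over silently, and your justification of it is sound.
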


\subsection{Stochastic approximation and gradient updates}\label{sec:SA}
In this section, we leverage the theory of stochastic approximation to provide convergence guarantees for sample-based distributional RL algorithms.

We will study a version of categorical policy evaluation that takes a mixture between two distributions, rather than using a KL gradient, as a means of updating the return distribution estimates. The algorithm proceeds by computing the target distribution $\widehat{\eta}^{(x_t, a_t)}_t$ as in Algorithm \ref{alg:c51}, but then rather than using the gradient of a KL loss, the updated return distribution is produced for some collection of learning rates $(\alpha_t(x, a) | (x, a) \in \mathcal{X} \times \mathcal{A}, t \geq 0)$ according to the following rule:
\begin{align}\label{eq:mixtureupdate}
&\eta_{t+1}^{(x, a)} \leftarrow (1-\alpha_t(x, a)) \eta_t^{(x, a)} + \alpha_t(x,a) \widehat{\eta}^{(x, a)}_t \ \forall (x, a) , \nonumber \\
&\mathrm{such\ that\ } \alpha_t(x, a) = 0 \mathrm{\ if\ } (x, a) \not= (x_t, a_t) \, .
\end{align}
That is, by taking a \emph{mixture} between $\eta_t^{(x_t, a_t)}$ and $\widehat{\eta}^{(x_t, a_t)}_t$. We denote this procedure as Algorithm \ref{alg:mixture}, which for completeness is stated in full in Section \ref{sec:mixturealgo} of the appendix. The question of whether convergence results hold for the KL update described in Section \ref{sec:grad} remains open, and is an interesting area for further research.

\subsubsection{Convergence of categorical policy evaluation}
We first show that, under standard conditions, categorical policy evaluation with the mixture update rule described above is guaranteed to converge to the fixed point of the  projected Bellman operator $\Pi_\mathcal{C} \mathcal{T}^\pi$, as described in Proposition \ref{prop:cat-contract}. We sketch out the main structure of the proof below; the full argument is given in the appendix.

\begin{restatable}{theorem}{SAevaluation}\label{thm:SAevaluation}
In the context of policy evaluation for some policy $\pi$, suppose that:
\begin{enumerate}[label=(\roman*)]
    \item \label{cond:RM} the stepsizes $(\alpha_t(x ,a)|t \geq 0, (x, a) \in \mathcal{X} \times \mathcal{A})$ satisfy the Robbins-Monro conditions:
    \begin{itemize}
        \item $\sum_{t=0}^\infty \alpha_t(x, a) = \infty$
        \item $\sum_{t=0}^\infty \alpha^2_t(x, a) <  C <\infty$
    \end{itemize}
    almost surely, for all $(x, a) \in \mathcal{X} \times \mathcal{A}$;
    \item \label{cond:boundedinitial} we have initial estimates $\eta_0^{(x, a)}$ of the distribution of returns for each state-action pair $(x, a) \in \mathcal{X} \times \mathcal{A}$, each with support contained in $[z_1, z_K]$.
\end{enumerate}
Then, for the updates given by Algorithm \ref{alg:mixture}, in the case of evaluation of the policy $\pi$, we have almost sure convergence of $\eta_t$ to $\eta_\mathcal{C}$ in $\overline{\ell}_2$, where $\eta_{\mathcal{C}}$ is the limiting return distribution function of Proposition \ref{prop:cat-contract}. That is,
\[
\overline{\ell}_2(\eta_t, \eta_\mathcal{C}) \rightarrow 0 \text{\ as\ } t \rightarrow \infty \text{\ \ almost surely.}
\]
\end{restatable}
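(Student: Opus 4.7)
The plan is to recast the mixture update as a classical stochastic-approximation iteration in the finite-dimensional parameter space of $\mathcal{P}^{\mathcal{X}\times\mathcal{A}}$, and then appeal to an asynchronous stochastic-approximation theorem (in the style of Jaakkola et al.\ or Bertsekas--Tsitsiklis) using the contraction property already established in Proposition~\ref{prop:cat-contract}. Concretely, I identify each $\eta_t^{(x,a)} = \sum_k p^{(x,a)}_{t,k}\delta_{z_k}$ with the probability vector $\mathbf{p}_t^{(x,a)}\in\Delta^{K-1}$, so that the whole iterate lives in a compact convex subset of $\mathbb{R}^{|\mathcal{X}||\mathcal{A}|K}$. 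Because $\ell_2$ between two distributions with support in $\{z_1,\dots,z_K\}$ is a genuine norm of the difference of their probability vectors (equivalently, of their CDFs), $\overline{\ell}_2$ is a sup-over-states of a Hilbert-type block norm on this parameter space.

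Next I would check the stochastic-approximation ingredients. Rewrite the update as
\[
\eta_{t+1}^{(x,a)} - \eta_t^{(x,a)} = \alpha_t(x,a)\bigl((\Pi_\mathcal{C}\mathcal{T}^\pi\eta_t)^{(x,a)} - \eta_t^{(x,a)} + w_t^{(x,a)}\bigr),
\]
with $w_t^{(x,a)} := \widehat{\eta}_t^{(x,a)} - (\Pi_\mathcal{C}\mathcal{T}^\pi\eta_t)^{(x,a)}$. The first step is to verify that $w_t$ is a martingale-difference sequence with respect to the natural filtration: since $\Pi_\mathcal{C}$ is affine on mixtures of Diracs and the stochastic Bellman target $\widehat{\mathcal{T}}^\pi\eta_t$ is unbiased for $\mathcal{T}^\pi\eta_t$ at the sampled coordinate (Section~\ref{sec:SAOfBellman}), we get $\mathbb{E}[\widehat{\eta}_t^{(x_t,a_t)}\mid\mathcal{F}_t]=(\Pi_\mathcal{C}\mathcal{T}^\pi\eta_t)^{(x_t,a_t)}$. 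Condition~\ref{cond:boundedinitial} together with the fact that $\Pi_\mathcal{C}$ projects onto $\mathcal{P}\subset\mathscr{P}([z_1,z_K])$ ensures every $\eta_t$ and every $\widehat{\eta}_t$ has support in $[z_1,z_K]$, so the parameter vectors are bounded and hence the noise has uniformly bounded conditional second moment in $\ell_2$.

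With these pieces and the $\sqrt{\gamma}$-contractivity of $\Pi_\mathcal{C}\mathcal{T}^\pi$ in $\overline{\ell}_2$ from Proposition~\ref{prop:cat-contract}, I invoke an asynchronous stochastic approximation result for fixed-point iterations of contraction mappings (e.g.\ Proposition~4.4 of Bertsekas--Tsitsiklis). The Robbins--Monro hypotheses on $\alpha_t(x,a)$ are given in (i); in particular $\sum_t\alpha_t(x,a)=\infty$ ensures each state--action pair is updated infinitely often. The theorem then yields $\overline{\ell}_2(\eta_t,\eta_\mathcal{C})\to 0$ almost surely, where $\eta_\mathcal{C}$ is the unique fixed point of $\Pi_\mathcal{C}\mathcal{T}^\pi$.

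The main obstacle is that standard asynchronous SA theorems are usually stated for contractions in a \emph{weighted max-norm} over scalar coordinates, whereas $\overline{\ell}_2$ is a sup-over-states of an $\ell_2$-type norm on the per-state parameter block. Two ways to handle this: (a)~work coordinate-wise on the CDF values at the grid points, observing that $\ell_2^2(\mu,\nu)$ is (up to the grid-spacing constant) a sum of squared CDF differences, and that $\Pi_\mathcal{C}\mathcal{T}^\pi$ remains a contraction in the corresponding block sup-norm; or (b)~absorb the whole parameter vector at each state--action into a single ``generalised coordinate'' and use the vector-valued asynchronous SA variant in which contraction is measured in a Banach space of parameter vectors indexed by $(x,a)$. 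Route~(b) is cleanest: the projection $\Pi_\mathcal{C}$ keeps iterates inside a convex compact set, the mixture update is a convex combination that also preserves this set, and the remaining hypotheses (noise boundedness, martingale property, stepsizes) were just verified, so the vector-valued theorem applies directly and gives the claimed almost sure convergence.
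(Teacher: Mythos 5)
Your reduction of the update to the form $\eta_{t+1}^{(x,a)}-\eta_t^{(x,a)}=\alpha_t(x,a)\bigl((\Pi_\mathcal{C}\mathcal{T}^\pi\eta_t)^{(x,a)}-\eta_t^{(x,a)}+w_t^{(x,a)}\bigr)$ is correct, and your verification of the martingale-difference property of $w_t$ (which hinges on $\Pi_\mathcal{C}$ being affine, so that it commutes with the expectation over $(r,x',a')$ -- this is exactly Lemma \ref{lem:signedmeasure}) and of boundedness is sound. The gap is in the step you yourself flag as the ``main obstacle'' and then do not actually close: there is no off-the-shelf asynchronous stochastic-approximation theorem for contractions measured in a \emph{block} maximum norm, i.e.\ a supremum over state--action pairs of a Euclidean-type norm on each per-pair parameter block. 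Proposition 4.4 of Bertsekas--Tsitsiklis is a synchronous/pseudo-gradient result, and the asynchronous results (their Proposition 4.5, or Tsitsiklis's Theorem 3) require a weighted maximum norm over \emph{scalar} coordinates. Your route (a) does not repair this: rewriting $\ell_2^2$ as a weighted sum of squared CDF increments still leaves you with a Euclidean norm within each block, and $\Pi_\mathcal{C}\mathcal{T}^\pi$ is \emph{not} known (and should not be expected) to contract coordinate-wise in the CDF values -- the $\sqrt{\gamma}$ factor in Proposition \ref{prop:cat-contract} arises from the horizontal compression of the support by $f_{r,\gamma}$, which shrinks the domain of integration, not the pointwise vertical differences $|F_{\nu_1}(z_i)-F_{\nu_2}(z_i)|$. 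Your route (b) simply asserts that ``the vector-valued theorem applies directly''; that theorem is precisely the thing that needs to be proved.

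The paper avoids this issue by taking the other branch of Tsitsiklis's analysis (his Theorem 2, based on monotonicity rather than norm contraction): it first establishes that $\mathcal{T}^\pi$, $\Pi_\mathcal{C}$, and hence $\Pi_\mathcal{C}\mathcal{T}^\pi$ are monotone with respect to element-wise stochastic dominance (Proposition \ref{prop:monotonemaps}), constructs deterministic sandwiching sequences $U_k, L_k$ converging to $\eta_\mathcal{C}$ in $\overline{\ell}_2$ (Lemma \ref{lem:ULconvergence}), and shows by induction that the iterates $\eta_t$ are eventually trapped between $L_k$ and $U_k$ (Lemma \ref{lem:sandwich2}), using the mean-zero signed-measure property of the noise evaluated at the CDF grid points. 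Your plan uses none of this monotonicity structure, which would indeed make for a cleaner and more modular argument \emph{if} the block-norm asynchronous SA theorem were available; as written, however, the proof is incomplete at its central step, and completing it would require essentially re-deriving an asynchronous convergence argument of comparable length to the one in the paper.
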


The proof follows the approach of \citet{AsynchSAandQLearning}; we combine classical stochastic approximation proof techniques with notions of stochastic dominance to prove the almost-sure convergence of the return distribution functions in $\overline{\ell}_2$. Proposition \ref{prop:monotonemaps} is an interesting result in its own right, as it establishes a formal language to describe the monotonocity of the distributional Bellman operator, which plays an important role in control operators \citep[e.g.][]{bertsekas12dynamic}.

We begin by showing that several variants of the Bellman operator are \emph{monotone} with respect to a particular partial ordering over probability distributions known as stochastic dominance \citep{StochasticOrders}.

\begin{definition}
Given two probability measures $\nu_1, \nu_2 \in \mathscr{P}(\mathbb{R})$, we say that $\nu_1$ \emph{stochastically dominates} $\nu_2$, and write $\nu_2 \leq \nu_1$, if there exists a coupling between $\nu_1$ and $\nu_2$ (that is, a probability measure on $\mathbb{R}^2$ with marginals given by $\nu_1$ and $\nu_2$) which is supported on the set $\{ (x_1, x_2) \in \mathbb{R}^2 | x_2 \geq x_1 \}$. An equivalent characterisation states that $\nu_2 \leq \nu_1$ if for the corresponding CDFs $F_{\nu_1}$ and $F_{\nu_2}$, we have
\[
F_{\nu_2}(x) \geq F_{\nu_1}(x) \quad \text{\ for\ all\ } x \in \mathbb{R} \, .
\]
\end{definition}
Stochastic dominance forms a partial order over the set $\mathscr{P}(\mathbb{R})$. We introduce a related partial order over the space of return distribution functions, $\returndists$, which we refer to as (element-wise) stochastic dominance. Given $\eta, \mu \in \returndists$, we say that $\eta$ stochastically dominates $\mu$ element-wise if for each $(x, a) \in \mathcal{X} \times \mathcal{A}$, $\eta^{(x, a)}$ stochastically dominates $\mu^{(x, a)}$.
\begin{restatable}{proposition}{monotonebellman}\label{prop:monotonemaps}
The distributional Bellman operator $\mathcal{T}^\pi : \returndists \rightarrow \returndists$ is a monotone map with respect to the partial ordering on $\returndists$ given by element-wise stochastic dominance. Further,
the Cram\'er projection $\Pi_\mathcal{C} : \returndists \rightarrow \returndists$ is a monotone map, from which it follows that the Cram\'er-Bellman operator $\Pi_\mathcal{C} \mathcal{T}^\pi$ is also monotone.
\end{restatable}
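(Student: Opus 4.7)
The plan is to handle the two operators $\mathcal{T}^\pi$ and $\Pi_\mathcal{C}$ separately and then conclude by observing that monotonicity is closed under composition. I would work with both characterisations of stochastic dominance provided in the definition: a coupling supported on $\{(y_1, y_2) : y_1 \leq y_2\}$, which is convenient for the Bellman step, and the reversed pointwise CDF inequality, which is convenient for the projection step.

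For the distributional Bellman operator, suppose $\mu \leq \eta$ element-wise. For each $(x', a')$, fix a witnessing coupling $\lambda^{(x',a')}$ of $\mu^{(x',a')}$ and $\eta^{(x',a')}$. Following the generative structure of \eqref{eq:distBellmanOp}, I would construct, for each $(x,a)$, a coupling of $(\mathcal{T}^\pi \mu)^{(x,a)}$ and $(\mathcal{T}^\pi \eta)^{(x,a)}$ by sampling $(R, X') \sim p(\cdot, \cdot \mid x, a)$, then $A' \sim \pi(\cdot \mid X')$, then $(Y_1, Y_2) \sim \lambda^{(X', A')}$, and returning $(R + \gamma Y_1, R + \gamma Y_2)$. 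Because $\gamma \geq 0$ and $Y_2 \geq Y_1$ almost surely, the pair lies in $\{y_1 \leq y_2\}$, and the two marginals are exactly $(\mathcal{T}^\pi \mu)^{(x,a)}$ and $(\mathcal{T}^\pi \eta)^{(x,a)}$ by construction. Hence $\mathcal{T}^\pi \mu \leq \mathcal{T}^\pi \eta$ element-wise.

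For the projection, I would reduce to a single distribution (the operator on $\returndists$ acts coordinate-wise) and express the CDF of $\Pi_\mathcal{C} \nu$ at each grid point as an integral against $\nu$. Combining the two non-trivial cases of \eqref{eq:cramerprojdiracs} via the affine extension yields $F_{\Pi_\mathcal{C} \nu}(z_i) = \int_\mathbb{R} h_i(y)\, \nu(\calcd y)$, where $h_i$ is the piecewise-linear hat function equal to $1$ for $y \leq z_i$, decreasing linearly to $0$ on $[z_i, z_{i+1}]$, and vanishing thereafter (with $h_K \equiv 1$ so that mass $y > z_K$ is counted correctly after being clipped to $z_K$). Each $h_i$ is bounded and non-increasing. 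Since stochastic dominance $\nu_1 \leq \nu_2$ is equivalent to $\int f\, \calcd\nu_1 \leq \int f\, \calcd\nu_2$ for every bounded non-decreasing $f$, applying this to $-h_i$ gives $F_{\Pi_\mathcal{C}\nu_1}(z_i) \geq F_{\Pi_\mathcal{C}\nu_2}(z_i)$ at every grid point. Because both projected measures are supported on $\{z_1, \dots, z_K\}$, their CDFs are step functions determined by these grid values, so the inequality extends to all of $\mathbb{R}$, giving $\Pi_\mathcal{C} \nu_1 \leq \Pi_\mathcal{C} \nu_2$. Closing the argument, if $\mu \leq \eta$ then $\mathcal{T}^\pi \mu \leq \mathcal{T}^\pi \eta$ by the first step, and applying $\Pi_\mathcal{C}$ preserves the inequality by the second.

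The main obstacle I expect is cleanly identifying the non-increasing functionals $h_i$ that represent $F_{\Pi_\mathcal{C}\nu}$ against $\nu$. This requires careful bookkeeping across the interior grid intervals and the two boundary regions $y \leq z_1$ and $y \geq z_K$, where the Dirac projection collapses mass onto the endpoints; the key qualitative fact to extract from this calculation is that the resulting integrand is monotone in $y$. Once that representation is in hand, Bellman monotonicity via the sampled coupling and closure under composition are essentially immediate.
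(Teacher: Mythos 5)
Your proof is correct, and it reaches the same two sub-claims as the paper but by somewhat different means. For the Bellman step, the paper argues directly on CDFs: it conditions on $(r,x',a')$, writes $(f_{r,\gamma})_\#\eta^{(x',a')}((-\infty,y]) = \eta^{(x',a')}((-\infty,(y-r)/\gamma])$, and applies the pointwise CDF inequality under the integral; you instead build an explicit coupling $(R+\gamma Y_1, R+\gamma Y_2)$ from witnessing couplings at each $(x',a')$. Both are valid (measurable selection of the couplings is trivial here since $\mathcal{X}\times\mathcal{A}$ is finite); your version is more constructive and does not need to invert $f_{r,\gamma}$, so it degenerates gracefully at $\gamma=0$, while the paper's is shorter. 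For the projection step, your representation $F_{\Pi_\mathcal{C}\nu}(z_i)=\int h_i\,\calcd\nu$ with $h_i$ the non-increasing truncated ramp is exactly the quantity $\mathbb{E}_{w\sim\nu}\bigl[\sum_{j\leq i} h_{z_j}(w)\bigr]$ that the paper establishes in Proposition \ref{prop:characteriseproj}; the paper then converts it to the ``average of $F_\nu$ over $[z_i,z_{i+1}]$'' form and concludes from $F_{\nu_1}\geq F_{\nu_2}$ pointwise, whereas you conclude from the dual characterisation of stochastic dominance via bounded monotone test functions. These are two faces of the same computation, and your final steps (CDFs of the projected measures are step functions determined at the grid, composition of monotone maps is monotone) match the paper's. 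No gaps.
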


The monotonicity of the mappings described in Proposition \ref{prop:monotonemaps} can then be harnessed to establish a chain of lemmas, given in the appendix, mirroring the chain of reasoning in \cite{AsynchSAandQLearning}, from which Theorem \ref{thm:SAevaluation} will follow. In the remainder of this section, we highlight a further important property of the Cram\'er projection $\Pi_\mathcal{C}$ which is crucial in establishing Theorem \ref{thm:SAevaluation}.

We observe from Algorithm \ref{alg:mixture} that the update rule appearing in Equation \eqref{eq:mixtureupdate} can be written
\begin{align*}
\eta_{t+1}^{(x, a)} =&  \eta_t^{(x, a)} + \alpha_t(x, a) ((\Pi_\mathcal{C} \mathcal{T}^\pi \eta_t)^{(x, a)} - \eta_t^{(x, a)}) \\ &+ \alpha_t(x,a) (\Pi_\mathcal{C}(f_{r, \gamma})_\# \eta_t^{(x^\prime, a^\prime)} - (\Pi_\mathcal{C} \mathcal{T}^\pi \eta_t)^{(x, a)}) \, .
\end{align*}
for all $(x, a) \in \mathcal{X} \times \mathcal{A}$, for all $t \geq 0$, given that $\alpha_t(x, a) = 0$ if the state $(x, a) \in \mathcal{X} \times \mathcal{A}$ is not selected for update at time $t$.
The second term,
\[
\alpha_t(x, a) ((\Pi_\mathcal{C} \mathcal{T}^\pi \eta_t)^{(x, a)} - \eta_t^{(x, a)}) \, ,
\]
may be interpreted as a damped version of the full distributional Bellman update, whilst the third term,
\[\alpha_t(x,a) (\Pi_\mathcal{C}(f_{r, \gamma})_\# \eta_t^{(x^\prime, a^\prime)} - (\Pi_\mathcal{C} \mathcal{T}^\pi \eta_t)^{(x, a)}) \, ,
\]
represents the noise introduced by stochastic approximation. We observe that this noise term is in fact a difference of two probability distributions (one of which is a random measure); thus, this noise term is a particular instance of a random \emph{signed measure}. The Cram\'er projection leads to an important property of this signed measure, which is crucial in establishing the result of Theorem \ref{thm:SAevaluation}, summarised in Lemma \ref{lem:signedmeasure}. 

\begin{lemma}\label{lem:signedmeasure}
The noise term
\[
\Pi_\mathcal{C}(f_{r, \gamma})_\# \eta_t^{(x^\prime, a^\prime)} - (\Pi_\mathcal{C} \mathcal{T}^\pi \eta_t)^{(x, a)}
\]
is a random signed measure with total mass $0$ almost surely, and with the property that when averaged over the next-step reward, state and action tuple $(r, x^\prime, a^\prime)$ it is equal to the zero measure almost surely:
\begin{align*}
&\mathbb{E}_{r, x^\prime, a^\prime}\left\lbrack(\Pi_\mathcal{C}(f_{r, \gamma})_\# \eta_t^{(x^\prime, a^\prime)} - (\Pi_\mathcal{C} \mathcal{T}^\pi \eta_t)^{(x, a)})\right\rbrack((-\infty, y]) \\
&\qquad\qquad\qquad\qquad\qquad\qquad\qquad\qquad\qquad\qquad= 0 \, ,
\end{align*}
for all $y \in \mathbb{R}$.
\end{lemma}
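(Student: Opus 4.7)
The plan is to verify the two claims in the lemma separately: that the noise term is a signed measure of total mass zero almost surely, and that its expectation over the transition tuple $(r, x^\prime, a^\prime)$ vanishes as a signed measure on $\mathbb{R}$. The first claim is essentially immediate, while the second rests on commuting $\Pi_\mathcal{C}$ with the integral defining $\mathcal{T}^\pi$. The main technical point I expect to need care with is extending $\Pi_\mathcal{C}$ from finite Dirac mixtures to general probability measures in a way that makes this interchange rigorous.

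For total mass zero, I would first observe that by \eqref{eq:cramerprojdiracs} and the affine extension, $\Pi_\mathcal{C}$ sends any probability measure on $[z_1, z_K]$ to a probability measure supported on $\{z_1, \ldots, z_K\}$: the coefficients appearing in $\Pi_\mathcal{C}(\delta_y)$ are nonnegative and sum to one for every $y$, and this property is preserved by convex combinations. Consequently, both $\Pi_\mathcal{C}(f_{r, \gamma})_\# \eta_t^{(x^\prime, a^\prime)}$ and $(\Pi_\mathcal{C}\mathcal{T}^\pi \eta_t)^{(x, a)}$ are probability measures almost surely, so their difference has total mass $1 - 1 = 0$.

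For the expectation claim, I would express $\Pi_\mathcal{C}$ explicitly as
\[
\Pi_\mathcal{C}\mu \;=\; \sum_{k=1}^K \left(\int_\mathbb{R} h_k(y)\,\mu(\calcd y)\right)\delta_{z_k} \, ,
\]
where $h_1, \ldots, h_K$ are the bounded piecewise-linear tent functions one reads off from \eqref{eq:cramerprojdiracs}. This formula agrees with the affine definition on finite Dirac mixtures and linearly extends $\Pi_\mathcal{C}$ to the space of signed Borel measures on $\mathbb{R}$. Applying Fubini to interchange each integral $\int h_k\,\calcd(\cdot)$ with the expectation over $(r, x^\prime, a^\prime)$, and then invoking the definition \eqref{eq:distBellmanOp} of $\mathcal{T}^\pi$, I obtain
\[
\mathbb{E}_{r, x^\prime, a^\prime}\bigl[\Pi_\mathcal{C}(f_{r, \gamma})_\# \eta_t^{(x^\prime, a^\prime)}\bigr] \;=\; (\Pi_\mathcal{C}\mathcal{T}^\pi \eta_t)^{(x, a)} \, .
\]
Since the second term of the noise is already $\mathcal{F}_t$-measurable, the two expectations cancel and the expected noise measure is identically zero; in particular, its value on every half-line $(-\infty, y]$ vanishes, which is the stated conclusion.

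The step I expect to be most delicate is justifying the tent-function representation of $\Pi_\mathcal{C}$ for measures beyond finite Dirac mixtures. This can be handled either by direct verification, checking that the CDF of the right-hand side matches that of the affinely-extended $\Pi_\mathcal{C}\mu$ on each $\{z_k\}$, or by appealing to boundedness and continuity of each functional $\mu \mapsto \int h_k\,\calcd\mu$ together with weak density of finitely supported probability measures. Once this linear representation is in hand, everything else is a short Fubini argument together with the defining identity for $\mathcal{T}^\pi$.
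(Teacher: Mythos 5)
Your argument is correct and is essentially the one the paper intends: the tent-function representation $\Pi_\mathcal{C}\nu = \sum_{k=1}^K \mathbb{E}_{w\sim\nu}[h_{z_k}(w)]\,\delta_{z_k}$ that you flag as the delicate step is exactly what the paper establishes for general $\nu \in \mathscr{P}(\mathbb{R})$ in Proposition~\ref{prop:characteriseproj} of the appendix, after which the total-mass and Fubini/linearity arguments go through as you describe (the paper leaves this short verification implicit rather than writing a separate proof of Lemma~\ref{lem:signedmeasure}).
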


\subsubsection{Convergence of categorical Q-learning}

Having established convergence of categorical policy evaluation in Theorem \ref{thm:SAevaluation}, we now leverage this to prove convergence of categorical Q-learning under similar conditions.

\begin{restatable}{theorem}{SAcontrol}\label{thm:SAcontrol}
Suppose that Assumptions \ref{cond:RM}--\ref{cond:boundedinitial} of Theorem \ref{thm:SAevaluation} hold, and that all unprojected target distributions $\widehat{\eta}_*^{(x_t, a_t)}$ arising in Algorithm \ref{alg:mixture} are supported within $[z_1, z_K]$ almost surely.
Assume further that there is a unique optimal policy $\pi^*$ for the MDP. Then, for the updates given in Algorithm \ref{alg:mixture}, in the case of control, we have almost sure convergence of $(\eta_t^{(x, a)})_{(x, a) \in \mathcal{X} \times \mathcal{A}}$ in $\overline{\ell}_2$ to some limit $\eta_\mathcal{C}^*$, and furthermore the greedy policy with respect to $\eta_\mathcal{C}^*$ is the optimal policy $\pi^*$.
\end{restatable}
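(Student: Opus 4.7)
The plan is to reduce the control setting to the policy evaluation setting (Theorem \ref{thm:SAevaluation}) by exploiting that the greedy action depends only on the means, and that the mean process $Q_t(x,a) := \mathbb{E}_{R \sim \eta_t^{(x,a)}}[R]$ satisfies the classical tabular Q-learning recursion. First I would observe that under the standing assumption that every $\widehat{\eta}_*^{(x_t,a_t)}$ is supported on $[z_1, z_K]$, a straightforward induction shows $\eta_t^{(x,a)}$ is supported on $[z_1, z_K]$ for all $t$ and all $(x,a)$. On this range $\Pi_\mathcal{C}$ is mean-preserving, because the two-point split $\Pi_\mathcal{C} \delta_y = \frac{z_{i+1}-y}{z_{i+1}-z_i}\delta_{z_i} + \frac{y-z_i}{z_{i+1}-z_i}\delta_{z_{i+1}}$ has mean exactly $y$, and this extends linearly to finite mixtures. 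Taking expectations of the mixture update \eqref{eq:mixtureupdate} therefore yields exactly the classical asynchronous Q-learning recursion
\[
Q_{t+1}(x_t,a_t) = (1-\alpha_t) Q_t(x_t,a_t) + \alpha_t\bigl(r_t + \gamma \max_{a'} Q_t(x_{t+1},a')\bigr),
\]
so by \citet{AsynchSAandQLearning} assumption \ref{cond:RM} yields $Q_t \to Q^*$ almost surely.

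Next I would use the uniqueness of $\pi^*$ to conclude that the action gap $\Delta := \min_{x} \min_{a \neq \pi^*(x)}(Q^*(x, \pi^*(x)) - Q^*(x, a))$ is strictly positive. Combined with $Q_t \to Q^*$ almost surely over the finite state-action space, this produces almost surely a finite random time $T$ such that for all $t \geq T$ and all $x$, $\argmax_{a'} Q_t(x, a') = \pi^*(x)$. On the event $\{T < \infty\}$ (a probability-one event), the iterates $(\eta_t)_{t \geq T}$ are produced by exactly the policy-evaluation updates of Algorithm \ref{alg:mixture} for $\pi^*$, started from the (random) initial condition $\eta_T$, which by the induction above is supported on $[z_1, z_K]$. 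The tail stepsizes $(\alpha_{T+s})_{s\geq 0}$ still satisfy Robbins--Monro, since truncating a finite prefix preserves divergence of sums and summability of squares. Invoking Theorem \ref{thm:SAevaluation} conditioned on $\mathcal{F}_T$ then gives $\overline{\ell}_2(\eta_t, \eta_\mathcal{C}^*) \to 0$ almost surely, where $\eta_\mathcal{C}^*$ is the unique fixed point of $\Pi_\mathcal{C} \mathcal{T}^{\pi^*}$ from Proposition \ref{prop:cat-contract}. Since $\ell_2$-convergence of distributions with common compact support implies convergence of means, the mean of $\eta_\mathcal{C}^{*,(x,a)}$ equals $Q^*(x,a)$; hence the greedy policy with respect to $\eta_\mathcal{C}^*$ is exactly $\pi^*$.

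The main obstacle is the careful stitching of these two phases. Applying Theorem \ref{thm:SAevaluation} from a random stopping time with a random initial condition formally requires either conditioning on $\mathcal{F}_T$ (so that $(\eta_T, T)$ is deterministic under the conditional law) or re-reading the policy-evaluation proof to see that its almost-sure convergence claim holds uniformly in deterministic initial conditions drawn from the bounded set $\mathcal{P}^{\mathcal{X}\times\mathcal{A}}$. A secondary but important point is to confirm that the asynchronous Q-learning convergence of \citet{AsynchSAandQLearning} genuinely applies to the present sampling model; this reduces to checking that assumption \ref{cond:RM} (together with the finiteness of $\mathcal{X} \times \mathcal{A}$) supplies the infinite-visitation and measurability requirements of that classical result.
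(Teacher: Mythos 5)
Your first phase matches the paper's: the means $Q_t(x,a)$ follow the classical asynchronous Q-learning recursion (because $\Pi_\mathcal{C}$ preserves means of distributions supported in $[z_1,z_K]$, which is where the support assumption on $\widehat{\eta}_*^{(x_t,a_t)}$ enters), so $Q_t \to Q^{\pi^*}$ almost surely by \citet{AsynchSAandQLearning}, and the unique-optimal-policy assumption gives a positive action gap and hence a finite random time after which the greedy policy is $\pi^*$. The gap is in the stitching step, which you correctly identify as the main obstacle but do not resolve. The random time $T$ at which the greedy policy stabilises is \emph{not a stopping time} --- the event $\{\pi_l = \pi^* \text{ for all } l \geq T\}$ depends on the entire future trajectory --- so ``conditioning on $\mathcal{F}_T$'' is not available, and more fundamentally, conditioning on the stabilisation event changes the law of the subsequent transitions and rewards. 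Consequently the post-$T$ iterates, restricted to that event, are \emph{not} distributed as a policy-evaluation run of Algorithm \ref{alg:mixture}, and Theorem \ref{thm:SAevaluation} cannot be invoked directly; your secondary remedy (uniformity of Theorem \ref{thm:SAevaluation} over deterministic initial conditions) does not repair this, since the problem is the distorted noise law, not the random initial condition.

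The paper explicitly flags that $N$ is not a stopping time and resolves the issue with a coupling argument: it constructs, on the same probability space and driven by the same family of independent stochastic Bellman operators $(\widehat{\mathcal{T}}^{\pi}_t)$, both a genuine $\pi^*$ policy-evaluation process $(\eta_l)$ (which converges to $\eta_\mathcal{C}^*$ on an event $B$ of probability one, unconditionally) and the Q-learning process $(\widetilde{\eta}_l)$. On the events $A_k = \{\pi_l = \pi^* \ \forall l \geq k\}$, whose probabilities increase to one, the two processes are driven by identical operators from time $k$ onward, and a direct $\sqrt{\gamma}$-contraction estimate in the Cram\'er--Hilbert geometry shows
\[
\ell_2^2(\widetilde{\eta}_{l+1}^{(x,a)}, \eta_{l+1}^{(x,a)}) \leq \bigl(1 - \alpha_l(x,a)(1-\sqrt{\gamma})\bigr)^2 \, \overline{\ell}_2^2(\widetilde{\eta}_l, \eta_l) \, ,
\]
from which $\overline{\ell}_2(\widetilde{\eta}_l, \eta_l) \to 0$ on $A_k \cap B$ follows via the Robbins--Monro conditions. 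This tracking argument is the missing ingredient in your proposal; without it (or an equivalent device) the reduction to Theorem \ref{thm:SAevaluation} does not go through.
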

Theorem \ref{thm:SAcontrol} is particularly interesting because it demonstrates that value-based control is not only stable in the distributional case, but also that CDRL \emph{preserves the optimal policy}. This is not a given: for example, if we were to replace $\Pi_{\mathcal{C}}$ with a nearest-neighbour-type projection we could not provide the same guarantee. What makes the CDRL projection step special in this regard is that it preserves the expected value of the unprojected target.

\section{DISCUSSION}
The $\texttt{C51}$ algorithm was empirically successful, but, as we have seen in Lemma \ref{lem:CramProjNotWNonExpansion}, is not explained by the initial theoretical results concerning CDRL of \citet{DistPerspective}. We have now shown that the projected distributional Bellman operator used in CDRL inherits convergence guarantees from a different metric altogether, the Cram\'er distance. From Propositions \ref{prop:cramerdist-etaC} and \ref{prop:cramerdist-etaC_misspecified}, we see that the limiting approximation error is controlled by the granularity of the parametric distribution and the discount factor $\gamma$. Furthermore, we have shown that in the stochastic approximation setting this update converges both for policy evaluation and control.

An important aspect of our analysis is the role of the projection onto the set of parametrised distributions, in distributional RL. Just as existing work has studied the role of the projected Bellman operator in function approximation \citep{LinearTDAnalysis}, there is a corresponding importance for considering the effects of the projection in distributional RL.

\subsection{Function approximation}\label{sec:functionapproximation}
Our theoretical results in Section \ref{sec:convergence} treat the problem of tabular distributional RL, with an approximate parametrisation distribution for each state-action pair.
Theoretical understanding of function approximation in RL has been the focus of much research, and has significantly improved our understanding of agent behaviour. Although we believe the effects of function approximation on distributional RL are of great theoretical and empirical interest, we leave the function approximation setting as an interesting direction for future work.

\subsection{Theoretically grounded algorithms}\label{sec:groundedalgs}
Turning theoretical results into practical algorithms can often be quite challenging. However, our results do suggest some immediate directions for potential improvements to $\texttt{C51}$. First, the convergence results for stochastic approximation suggest that an improved algorithm could be obtained by either directly minimising the Cram\'er distance or through a regularised KL minimisation that more closely reflects the mixture updates in Section \ref{sec:SA}. Second, the results of Propositions \ref{prop:cramerdist-etaC} and \ref{prop:cramerdist-etaC_misspecified} indicate that if our support is densely focused around the true range of returns we should expect significantly better performance, due to the effects of the discount factor. Improving this by either prior domain knowledge or adapting the support to reflect the true return range could yield much better empirical performance.

\section{CONCLUSION}
In this paper we have introduced a framework for distributional RL algorithms, and provided convergence analysis of recently proposed algorithms. We have introduced the notion of the projected distributional Bellman operator and argued for its importance in the theory of distributional RL.

Interesting future directions from an empirical perspective include exploring the space of possible distributional RL algorithms set out in Section \ref{sec:c51}. From a theoretical perspective, the issue of how function approximation interacts with distributional RL remains an important open question.

\newpage

\section*{Acknowledgements}
The authors acknowledge the important contributions of their colleagues at DeepMind. Special thanks to Wojciech Czarnecki, Chris Maddison, Ian Osband and Grzegorz Swirszcz for their early suggestions and discussions. Thanks also to Clare Lyle for useful comments.
\bibliography{aistatsbib}

\onecolumn
\newpage

\section*{Appendix}

\section{A supporting result}\label{sec:support}
We first present an alternative characterisation of the projection operator $\Pi_\mathcal{C}$ which will be useful for the analysis that follows. Throughout, for a probability measure $\nu \in \mathscr{P}(\mathbb{R})$, we write $F_\nu$ for its CDF.
\begin{restatable}{proposition}{characteriseproj}\label{prop:characteriseproj}
For each $i=1,\ldots, K$, define $h_{z_i} : \mathbb{R} \rightarrow [0,1]$ to be the (possibly asymmetric) hat function centered in $z_i$ defined by
\[
h_{z_i}(x)= 
\left\{
\begin{array}{lll}
 \frac{z_{i+1}-x}{z_{i+1}-z_i} & \mbox{ for } x\in [z_i, z_{i+1}] &\mbox{ and } 1\leq i < K,\\
 \frac{x-z_{i-1}}{z_{i}-z_{i-1}} & \mbox{ for } x\in [z_{i-1}, z_{i}] &\mbox{ and } 1< i \leq K, \\
 1  & \mbox{ for } x \leq z_1 &\mbox{ and } i=1, \\
 1  & \mbox{ for } x\geq z_K &\mbox{ and } i=K, \\
 0 & \mbox{ otherwise.} &
\end{array}
\right.
\]
Then defining $\Pi_\mathcal{C} \nu = \sum_{i=1}^K \mathbb{E}_{w \sim\nu}[h_{z_i}(w)] \delta_{z_i}$ for all probability distributions $\nu \in \mathscr{P}(\mathbb{R})$, is consistent with the earlier definition in \eqref{eq:cramerprojdiracs} for mixtures of Diracs. Further, $F_{\Pi_\mathcal{C} \nu}(z_i)$ is equal to the average value of $F_{\nu}$ in the interval $[z_i, z_{i+1}]$, for $i=1,\ldots, K-1$, and $F_{\Pi_\mathcal{C} \nu}(z_K) = 1$.
\end{restatable}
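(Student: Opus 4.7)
The plan is to split the proof into two parts: (A) the consistency statement, showing the new formula agrees with the Dirac definition from \eqref{eq:cramerprojdiracs}, and (B) the CDF/average characterisation.

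For part (A), I would first check consistency for a single Dirac $\delta_y$. Computing $\mathbb{E}_{w \sim \delta_y}[h_{z_i}(w)] = h_{z_i}(y)$, the new definition gives $\Pi_\mathcal{C}(\delta_y) = \sum_{i=1}^K h_{z_i}(y) \delta_{z_i}$. A case analysis on the three regions in the piecewise definition of $h_{z_i}$ then matches the three cases in \eqref{eq:cramerprojdiracs}: for $y \leq z_1$ only $h_{z_1}(y) = 1$ is nonzero; for $z_i < y \leq z_{i+1}$ only $h_{z_i}$ and $h_{z_{i+1}}$ are nonzero, with values $(z_{i+1}-y)/(z_{i+1}-z_i)$ and $(y-z_i)/(z_{i+1}-z_i)$; and for $y > z_K$ only $h_{z_K}(y) = 1$. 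Extending to a finite mixture $\sum_j p_j \delta_{y_j}$ is then immediate by linearity of expectation, matching the affine extension specified after \eqref{eq:cramerprojdiracs}.

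For part (B), I would start from
\[
F_{\Pi_\mathcal{C} \nu}(z_i) \;=\; \sum_{j=1}^{i} \mathbb{E}_{w \sim \nu}[h_{z_j}(w)] \;=\; \mathbb{E}_{w \sim \nu}\!\left[ \sum_{j=1}^{i} h_{z_j}(w) \right].
\]
The key computation is that the hats form a partition of unity: $\sum_{j=1}^K h_{z_j}(w) = 1$ for every $w \in \mathbb{R}$ (including the flat extensions outside $[z_1, z_K]$). Using this, a case split on $w$ gives
\[
\sum_{j=1}^{i} h_{z_j}(w) \;=\; \mathbbm{1}[w \leq z_i] \;+\; \mathbbm{1}[z_i < w \leq z_{i+1}] \cdot \frac{z_{i+1} - w}{z_{i+1} - z_i},
\]
since for $w \leq z_i$ all hats with index $>i$ vanish, for $w$ in the interval only $h_{z_i}$ among the first $i$ hats contributes, and for $w > z_{i+1}$ all hats with index $\leq i$ vanish. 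Taking expectation under $\nu$ yields
\[
F_{\Pi_\mathcal{C} \nu}(z_i) \;=\; \mathbb{P}_\nu(w \leq z_i) \;+\; \int_{z_i}^{z_{i+1}} \frac{z_{i+1}-w}{z_{i+1}-z_i}\, \mathrm{d}\nu(w).
\]
On the other side, I would apply Fubini to the average of the CDF:
\[
\int_{z_i}^{z_{i+1}} F_\nu(x)\,\mathrm{d}x \;=\; \int \mathrm{Leb}\bigl(\{x \in [z_i, z_{i+1}] : x \geq w\}\bigr)\, \mathrm{d}\nu(w),
\]
and evaluating the inner Lebesgue measure by the same three cases (giving $z_{i+1}-z_i$, $z_{i+1}-w$, or $0$) and dividing by $z_{i+1}-z_i$ produces exactly the same expression. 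For $i = K$, the partition of unity immediately gives $F_{\Pi_\mathcal{C}\nu}(z_K) = \mathbb{E}_\nu[1] = 1$.

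I do not foresee a serious obstacle; the argument is essentially bookkeeping, with the one point requiring care being the boundary behaviour of $h_{z_1}$ and $h_{z_K}$ (the flat extensions beyond the endpoints), which is what makes $\sum_j h_{z_j} \equiv 1$ hold globally and in turn makes the computation of $\sum_{j=1}^i h_{z_j}(w)$ clean in each of the three regions.
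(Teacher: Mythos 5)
Your argument is correct and follows essentially the same route as the paper's proof: consistency via the single-Dirac case plus affineness, and the CDF characterisation via $F_{\Pi_\mathcal{C}\nu}(z_i) = \mathbb{E}_{w\sim\nu}[\sum_{j=1}^i h_{z_j}(w)]$ with the same case split; you merely make explicit the Fubini step equating this with the averaged CDF, which the paper leaves implicit.
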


\begin{proof}
The consistency of the definition $\Pi_\mathcal{C} \nu = \sum_{i=1}^K \mathbb{E}_{w \sim\nu}[h_{z_i}(w)] \delta_{z_i}$ with \eqref{eq:cramerprojdiracs} follows immediately by observing directly that the definitions agree when $\nu$ is a Dirac measure, and then observing that the definition of $\Pi_\mathcal{C}$ in the statement of the proposition is also affine.

For the characterisation of $F_{\Pi_\mathcal{C} \nu}(z_i)$ for $i=1,\ldots, K-1$, we note that
\begin{align*}
F_{\Pi_\mathcal{C} \nu}(z_i) & = \sum_{j=1}^i \mathbb{E}_{w \sim\nu}[h_{z_j}(w)] \\
 & = \mathbb{E}_{w \sim\nu}\left[\sum_{j=1}^i h_{z_j}(w)\right] \\
  & = \mathbb{E}_{w \sim \nu}\left[ \mathbbm{1}_{w \leq z_i} + \mathbbm{1}_{w \in (z_i, z_{i+1}]} \frac{z_{i+1} - w}{z_{i+1} - z_i} \right] \\
  & = \frac{1}{z_{i+1} - z_i} \int_{z_i}^{z_{i+1}} F_\nu(w) \calcd w \, ,
\end{align*}
as required. Finally, since $\Pi_\mathcal{C} \nu$ is supported on $\{z_1, \ldots, z_K\}$, it immediately follows that $F_{\Pi_\mathcal{C} \nu}(z_K) = 1$.
\end{proof}

\section{Mixture update version of categorical policy evaluation and categorical Q-learning}\label{sec:mixturealgo}
Here we give a precise specification of the mixture update versions of categorical policy evaluation and categorical Q-learning, as described in the main paper in Section \ref{sec:SA}. The difference from Algorithm \ref{alg:c51} is highlighted in red.

\begin{algorithm}[H]
	\caption{CDRL mixture update}\label{alg:mixture}
	\begin{algorithmic}[1]
		\REQUIRE $\eta_t^{(x, a)} = \sum_{k=1}^K p^{(x, a)}_{t, k} \delta_{z_k}$ for each $(x, a)$
		\STATE Sample transition $(x_t, a_t, r_t, x_{t+1})$
		\STATE \textcolor{gray}{\# Compute distributional Bellman target}
		\IF{Categorical policy evaluation}
		\STATE $a^* \sim \pi(\cdot | x_{t+1})$
		\ELSIF{Categorical Q-learning}
		\STATE $a^* \leftarrow \argmax_a \mathbb{E}_{R \sim \eta_t^{(x_{t+1}, a)}}[R]$
		\ENDIF
		\STATE $\widehat{\eta}^{(x_t, a_t)}_* \leftarrow (f_{r_t, \gamma})_\# \eta_t^{(x_{t+1}, a^*)}$ 
		\STATE \textcolor{gray}{\# Project target onto support}
		\STATE $\widehat{\eta}^{(x_t, a_t)}_t \leftarrow \Pi_\mathcal{C} \widehat{\eta}_*^{(x_t, a_t)}$ 
		\STATE \textcolor{red}{\# Compute mixture update}
		\STATE \textcolor{red}{Generate new estimates according to mixture rule: $\eta_{t+1}^{(x_t, a_t)} = (1-\alpha_t(x_t, a_t)) \eta_t^{(x_t, a_t)} + \alpha_t(x_t,a_t) \widehat{\eta}^{(x_t, a_t)}_t$}
		\RETURN $\eta_{t+1}$
	\end{algorithmic}
\end{algorithm}

\section{Proof of results in Section \ref{sec:convergence}}

\CramProjNotWNonExpansion*

\begin{proof}
We exhibit a simple counterexample; it is enough to demonstrate that $\Pi_\mathcal{C}$ can act as an expansion. Take $z_1 = 0, z_2 = 1$, and consider two Dirac delta distributions, $\nu_1 = \delta_{1/4}$ and $\nu_2  = \delta_{3/4}$. We have $d_p(\nu_1, \nu_2) = ((1/2)^p)^{1/p} = 1/2$. Now $\Pi_\mathcal{C} \nu_1 = \frac{3}{4} \delta_0 + \frac{1}{4} \delta_1$, and $\Pi_\mathcal{C} \nu_2 = \frac{1}{4} \delta_0 + \frac{3}{4} \delta_1$, and hence $d_p(\Pi_\mathcal{C} \nu_1 , \Pi_\mathcal{C} \nu_2) = ((1/2) \times 1^p)^{1/p} = 2^{-1/p} > 1/2$.
\end{proof}

\projnonexpansion*

\begin{proof}
We begin by setting out a Hilbert space structure of a subset of $\mathscr{P}(\mathbb{R})$. Let $\mathcal{M}(\mathbb{R})$ be the vector space of all finite signed measures on $\mathbb{R}$. First, observe that the following subspace of signed measures:
\[
\mathcal{M}_0(\mathbb{R}) = \left\{ \nu \in \mathcal{M}(\mathbb{R}) \middle| \nu(\mathbb{R}) = 0 \, , \int_\mathbb{R} F_\nu(x)^2 \calcd x < \infty \right\} \, ,
\]
where $F_\nu(x) = \nu((-\infty, x])$ for each $x \in \mathbb{R}$, is isometrically isomorphic to a subspace of the Hilbert space $L^2(\mathbb{R})$ with inner product given by
\begin{align}\label{eq:innerprod}
\langle \nu_1, \nu_2 \rangle_{\ell_2} = \int_\mathbb{R} F_{\nu_1}(x) F_{\nu_2}(x) \calcd x \, .
\end{align}
Now consider the affine space $\delta_0 + \mathcal{M}_0(\mathbb{R})$ (i.e. the translation of $\mathcal{M}_0(\mathbb{R})$ in $\mathcal{M}(\mathbb{R})$ by the measure $\delta_0$). This affine space consists of signed measures of total mass $1$, with sufficiently quickly decaying tails. In particular, it contains the set of probability measures $\nu \in \mathscr{P}(\mathbb{R})$ satisfying
\[
\int_{-\infty}^0 F_\nu(x)^2 \calcd x < \infty \text{\ \ and \ \ } \int_{0}^\infty (1-F_\nu(x))^2 \calcd x < \infty \, .
\]
As $\delta_0 + \mathcal{M}_0(\mathbb{R})$ is an affine translation of a Hilbert space, it inherits the inner product defined in \eqref{eq:innerprod} from $\mathcal{M}_0(\mathbb{R})$, which is now defined for differences of elements. Now consider the affine subspace consisting of measures supported on $\{z_1, \ldots, z_K\}$. It is clear that this is a closed affine subspace (since it is finite-dimensional), and therefore there exists an orthogonal projection (with respect to the inner product defined above) onto this subspace, which we denote by $\Pi$. Given a probability measure $\nu \in \delta_0 + \mathcal{M}_0(\mathbb{R})$, $\Pi \nu = \sum_{i=1}^K p_i \delta_{z_i}$, where the $p_i$ satisfy $\sum_{i=1}^K p_i = 1$, and subject to this constraint, minimise $\langle \Pi \nu - \nu , \Pi\nu  - \nu \rangle_{\ell_2}$. But note that
\begin{align}\label{eq:l2min}
\langle \Pi \nu - \nu , \Pi\nu  - \nu \rangle_{\ell_2} = \int_\mathbb{R} (F_{\Pi\nu}(x) - F_\nu(x))^2 \calcd x \, .
\end{align}
By construction, $F_{\Pi\nu}$ is constant on the open intervals $(z_i, z_{i+1})$ for $i=1,\ldots, K-1$, and also on the intervals $(-\infty, z_1)$ and $(z_K, +\infty)$. Therefore $F_{\Pi\nu}$, and hence $\Pi\nu$ itself, is determined by the values of $F_{\Pi\nu}(z_i)$ for $i=1,\ldots, K$. The optimal values (i.e. those minimising \eqref{eq:l2min}) are easily verified to be: $F_{\Pi \nu}(z_K) = 1$, and $F_{\Pi\nu}(z_i)$ is equal to the average of $F_\nu$ on the interval $(z_i, z_{i+1})$, for $i=1,\ldots, K-1$. Note then that $\Pi \nu$ is a probability distribution (since $F_{\Pi\nu}$ is non-decreasing), and in fact matches the characterisation of $\Pi_\mathcal{C} \nu$ obtained in Proposition \ref{prop:characteriseproj}. Therefore we have established that $\Pi_\mathcal{C}$ is exactly orthogonal projection in the affine Hilbert space $\delta_0 + \mathcal{M}_0(\mathbb{R})$. Further, we have verified that the norm between elements in the space is exactly $\ell_2$, and hence it follows that $\Pi_\mathcal{C}$ is a non-expansion with respect to $\ell_2$.
\end{proof}

\pythag*

\begin{proof}
Denote by $F_\mu$, $F_{\Pi_\mathcal{C} \mu}$ and $F_{\nu}$ the CDFs of the measures $\mu$, $\Pi_\mathcal{C} \mu$ and $\nu$ respectively. Now note
\begin{align*}
    \ell_2^2(\mu, \nu) & = \int_{z_1}^{z_K} (F_\mu(x) - F_{\nu}(x))^2 \calcd x \\
    & = \int_{z_1}^{z_K} (F_\mu(x) - F_{\Pi_\mathcal{C} \mu}(x) + F_{\Pi_\mathcal{C} \mu}(x) - F_{\nu}(x))^2 \calcd x \\
    & = \int_{z_1}^{z_K} (F_\mu(x) - F_{\Pi_\mathcal{C}\mu}(x))^2 \calcd x + \int_{z_1}^{z_K} (F_\nu(x) - F_{\Pi_\mathcal{C}\mu}(x))^2 \calcd x \\
    &\qquad\qquad- 2\int_{z_1}^{z_K} (F_\mu(x) - F_{\Pi_\mathcal{C}\mu}(x))(F_\nu(x) - F_{\Pi_\mathcal{C}\mu}(x)) \calcd x \, .
\end{align*}
Finally, observe that
\begin{align*}
&\int_{z_1}^{z_K} (F_\mu(x) - F_{\Pi_\mathcal{C}\mu}(x))(F_\nu(x) - F_{\Pi_\mathcal{C}\mu}(x)) \calcd x \\
 = & \sum_{k=1}^{K-1} (F_{\nu}(z_k) - F_{\Pi_\mathcal{C}\mu}(z_k)) \int_{z_k}^{z_{k+1}} (F_\mu(x) - F_{\Pi_\mathcal{C}\mu}(x)) \calcd x \\
 = & 0 \, ,
\end{align*}
since by Proposition \ref{prop:characteriseproj}, $F_{\Pi_\mathcal{C} \mu}$ is constant on $(z_k, z_{k+1})$, and is equal to the average of $F_\mu$ on the same interval.
\end{proof}

\catcontract*

\begin{proof}
First, we show that the true distributional Bellman operator $\mathcal{T}^{\pi}$ is a $\sqrt{\gamma}$-contraction in $\overline{\ell}_2$. Note that through notions of scale sensitivity, as discussed by \citet{CramerGAN}, the ideas here may be extended to other distances over probability measures. Let $\eta, \mu \in \returndists$.
Then
\begin{align*}
\ell_2^2((\mathcal{T}^\pi \eta)^{(x,a)}, (\mathcal{T}^\pi \mu)^{(x,a)}) =& \ell_2^2\Bigg( \int_\mathbb{R} \sum_{(x^\prime, a^\prime) \in \mathcal{X} \times \mathcal{A}} \pi(a^\prime|x^\prime) p(\calcd r, x^\prime|x, a) (f_{r, \gamma})_\#\eta^{(x^\prime,a^\prime)}, \\
& \qquad \int_\mathbb{R} \sum_{(x^\prime, a^\prime) \in \mathcal{X} \times \mathcal{A}} \pi(a^\prime|x^\prime) p(\calcd r, x^\prime|x, a) (f_{r, \gamma})_\#\mu^{(x^\prime,a^\prime)}\Bigg) \\
& \leq \int_\mathbb{R}\sum_{(x^\prime, a^\prime) \in \mathcal{X} \times \mathcal{A}} \pi(a^\prime|x^\prime) p(\calcd r, x^\prime|x, a) \ell_2^2((f_{r, \gamma})_\#\eta^{(x^\prime, a^\prime)}, (f_{r, \gamma})_\#\mu^{(x^\prime, a^\prime)}) \\
& = \int_\mathbb{R}\sum_{(x^\prime, a^\prime) \in \mathcal{X} \times \mathcal{A}} \pi(a^\prime|x^\prime) p(\calcd r, x^\prime|x, a) \gamma \ell_2^2(\eta^{(x^\prime, a^\prime)}, \mu^{(x^\prime, a^\prime)}) \\
& \leq \gamma \overline{\ell}_2^2(\eta, \mu) \, ,
\end{align*}
with the first inequality following from Jensen's inequality, and the equality coming from the follow general fact about the Cram\'er distance and probability measures $\nu_1, \nu_2 \in \mathscr{}P(\mathbb{R})$:
\begin{align*}
\ell^2_2((f_{r,\gamma})_\# \nu_1, (f_{r,\gamma})_\# \nu_2) & = \int_{\mathbb{R}} (F_{(f_{r,\gamma})_\#\nu_1}(t) - F_{(f_{r,\gamma})_\#\nu_2}(t))^2 dt \\
& = \int_{\mathbb{R}} (F_{\nu_1}(f_{r,\gamma}^{-1}(t)) - F_{\nu_2}(f_{r,\gamma}^{-1}(t)))^2 dt \\
& = \int_{\mathbb{R}} \left(F_{\nu_1}\left(\frac{t-r}{\gamma}\right) - F_{\nu_2}\left(\frac{t-r}{\gamma}\right)\right)^2 dt \\
& = \gamma \int_{\mathbb{R}} \left(F_{\nu_1}\left(t^\prime\right) - F_{\nu_2}\left(t^\prime\right)\right)^2 dt^\prime \\
& = \gamma \ell_2^2(\nu_1, \nu_2)\, .
\end{align*}
Now by Proposition \ref{prop:projnonexpansion}, $\Pi_\mathcal{C}$ is a non-expansion in $\overline{\ell}_2$. Therefore $\Pi_\mathcal{C} \mathcal{T}^\pi$ is the composition of a non-expansion in $\overline{\ell}_2$ with a $\sqrt{\gamma}$-contraction in $\overline{\ell}_2$, and is therefore itself a $\sqrt{\gamma}$-contraction in $\overline{\ell}_2$. The second claim of the proposition then follows immediately from the Banach fixed point theorem.
\end{proof}

\cramerdistetac*

\begin{proof}
By Lemma \ref{lem:pythag}, we have:
\begin{align}\label{eq:recursion}
\overline{\ell}_2^2(\eta_\mathcal{C}, \eta_\pi) = & \sup_{(x,a) \in \mathcal{X} \times \mathcal{A}}\ell_2^2(\eta_\mathcal{C}^{(x, a)}, \eta_\pi^{(x, a)}) \nonumber \\
= & \sup_{(x,a) \in \mathcal{X} \times \mathcal{A}}\left\lbrack \ell_2^2(\eta_\mathcal{C}^{(x, a)}, (\Pi_\mathcal{C} \eta_\pi)^{(x, a)}) + \ell_2^2((\Pi_\mathcal{C}\eta_\pi)^{(x, a)}, \eta_\pi^{(x,a)}) \right\rbrack \nonumber \\
\leq &\ \overline{\ell}_2^2(\eta_\mathcal{C}, \Pi_\mathcal{C} \eta_\pi) + \overline{\ell}_2^2(\Pi_\mathcal{C} \eta_\pi, \eta_\pi) \nonumber\\
= &\  \overline{\ell}_2^2(\Pi_\mathcal{C} \mathcal{T}^\pi \eta_\mathcal{C}, \Pi_\mathcal{C} \mathcal{T}^\pi \eta_\pi) + \overline{\ell}_2^2(\Pi_\mathcal{C} \eta_\pi, \eta_\pi) \nonumber \\
\leq &\ \gamma \overline{\ell}_2^2(\eta_\mathcal{C}, \eta_\pi) + \overline{\ell}_2^2(\Pi_\mathcal{C} \eta_\pi, \eta_\pi)
\, ,
\end{align}
where in the final line we have used the contractivity of $\Pi_\mathcal{C} \mathcal{T}^\pi$ under $\overline{\ell}_2$ from Proposition \ref{prop:cat-contract}.
Due to Proposition \ref{prop:characteriseproj} (see Section \ref{sec:support}) we have that $F_{\Pi_\mathcal{C}\eta^{(x, a)}_\pi}$ is constant on the intervals $(z_i, z_{i+1})$ for $i=1,\ldots,K-1$, and moreover, due to the formula for the mass placed at the locations $z_{1:K}$, we also have
\[
F_{\Pi_\mathcal{C}\eta^{(x, a)}_\pi}(z_i) \in [ F_{\eta^{(x, a)}_\pi}(z_i), F_{\eta^{(x, a)}_\pi}(z_{i+1}) ] \quad \text{for }  i=1,\ldots, K-1 \quad \, , F_{\Pi_\mathcal{C} \eta^{(x, a)}_\pi}(z_K) = 1 \, .
\]
Therefore,
\begin{align*}
\ell_2^2(\Pi_\mathcal{C} \eta^{(x, a)}_\pi, \eta^{(x, a)}_\pi) & \leq \sum_{i=1}^{K-1} (z_{i+1} - z_i) (F_{\eta^{(x, a)}_\pi}(z_{i+1}) - F_{\eta^{(x, a)}_\pi}(z_i))^2 \\
& \leq  \left\lbrack \sup_{1\leq i < K} (z_{i+1} - z_i) \right\rbrack \sum_{i=1}^{K-1} (F_{\eta^{(x, a)}_\pi}(z_{i+1}) - F_{\eta^{(x, a)}_\pi}(z_i))^2  \\
& \leq  \left\lbrack \sup_{1\leq i < K} (z_{i+1} - z_i) \right\rbrack \left\lbrack \sum_{i=1}^{K-1} (F_{\eta^{(x, a)}_\pi}(z_{i+1}) - F_{\eta^{(x, a)}_\pi}(z_i))\right\rbrack^2 \\
& \leq \sup_{1\leq i < K} (z_{i+1} - z_i) \, ,
\end{align*}
for each $(x, a) \in \mathcal{X} \times \mathcal{A}$, yielding
\[
\overline{\ell}_2^2(\Pi_\mathcal{C} \eta_\pi, \eta_\pi) \leq \sup_{1\leq i < K} (z_{i+1} - z_i) \, .
\]

Thus, taking \eqref{eq:recursion}, applying the upper bound on $\overline{\ell}_2^2(\Pi_\mathcal{C} \eta_\pi, \eta_\pi)$ and rearranging, we obtain
\[
\overline{\ell}_2^2(\eta_\mathcal{C}, \eta_\pi)  \leq \frac{1}{1 - \gamma} \sup_{1\leq i < K} (z_{i+1} - z_i) \, .
\]
\end{proof}

\cramerdistmisspecified*

\begin{proof}
The proof proceeds as for that of Proposition \ref{prop:cramerdist-etaC}, obtaining the inequality
\[
\overline{\ell}_2^2(\eta_\mathcal{C}, \eta_\pi) \leq \frac{1}{1 - \gamma} \overline{\ell}_2^2(\Pi_\mathcal{C} \eta_\pi, \eta_\pi) \, .
\]
We now bound the right-hand side as follows:
\begin{align*}
\ell_2^2(\Pi_\mathcal{C} \eta^{(x, a)}_\pi, \eta^{(x, a)}_\pi) & \leq q^2 \times (z_1 - (z_1-\delta)) + q^2 ((z_K + \delta) - z_K) + \sum_{i=1}^{K-1} (z_{i+1} - z_i) (F_{\eta^{(x, a)}_\pi}(z_{i+1}) - F_{\eta^{(x, a)}_\pi}(z_i))^2 \\
& \leq 2q^2 \delta + \sup_{1\leq i < K} (z_{i+1} - z_i) \, ,
\end{align*}
which yields the result as required.
\end{proof}

\monotonebellman*

\begin{proof}
Let $\eta, \mu \in \returndists$, and suppose that $\eta \leq \mu$. This is equivalent to $F_{\eta^{(x, a)}} \geq F_{\mu^{(x, a)}}$ pointwise, for each $(x, a) \in \mathcal{X} \times \mathcal{A}$. We now compute the CDFs of $(\mathcal{T}^\pi \eta)^{(x, a)}$ and $(\mathcal{T}^\pi \mu)^{(x, a)}$, for each $(x, a) \in \returndists$, and show that stochastic dominance still holds. Indeed, by conditioning on the value of the tuple $(r, x^\prime, a^\prime)$, we obtain, for each
\begin{align*}
    (\mathcal{T}^\pi \eta)^{(x, a)}((-\infty, y]) & = \sum_{(x^\prime, a^\prime) \in \mathcal{X}\times\mathcal{A}} \int_{\mathbb{R}} p(\calcd r, x^\prime | x, a) \pi(a^\prime | x^\prime) (f_{r, \gamma})_{\#}\eta^{(x^\prime, a^\prime)}((-\infty, y]) \\
    & = \sum_{(x^\prime, a^\prime) \in \mathcal{X}\times\mathcal{A}} \int_{\mathbb{R}} p(\calcd r, x^\prime | x, a) \pi(a^\prime | x^\prime) \eta^{(x^\prime, a^\prime)}((-\infty, (y-r)/\gamma]) \\
    & \geq \sum_{(x^\prime, a^\prime) \in \mathcal{X}\times\mathcal{A}} \int_{\mathbb{R}} p(\calcd r, x^\prime | x, a) \pi(a^\prime | x^\prime) \mu^{(x^\prime, a^\prime)}((-\infty, (y-r)/\gamma]) \\
    & = \sum_{(x^\prime, a^\prime) \in \mathcal{X}\times\mathcal{A}} \int_{\mathbb{R}} p(\calcd r, x^\prime | x, a) \pi(a^\prime | x^\prime) (f_{r, \gamma})_{\#}\mu^{(x^\prime, a^\prime)}((-\infty, y]) \\
    & = (\mathcal{T}^\pi \mu)^{(x, a)}((-\infty, y]) \, ,
\end{align*}
as required, with the inequality coming from the fact that $\mu^{(x^\prime, a^\prime)}$ stochastically dominates $\eta^{(x^\prime, a^\prime)}$. This concludes the proof that the distributional Bellman operator $\mathcal{T}^\pi$ is monotone with respect to the partial order of element-wise stochastic dominance.

The monotonocity of the Cram\'er projection $\Pi_\mathcal{C}$ may be established from the expression given for the projection in Proposition \ref{prop:characteriseproj}. Suppose we have two distributions $\nu_1, \nu_2 \in \mathscr{P}(\mathbb{R})$, and suppose further that $\nu_1 \leq \nu_2$. Then recall from Proposition \ref{prop:characteriseproj} that we have $F_{\Pi_\mathcal{C} \nu_1}(w)$ and $F_{\Pi_\mathcal{C} \nu_2}(w)$ equal to $0$ for $w < z_1$ and equal to $1$ for $w \geq z_K$. For $w \in [z_i, z_{i+1})$ for some $i \in \{1,\ldots, K-1\}$, recall again from Proposition \ref{prop:characteriseproj} that we have
\begin{align}\label{eq:monotoneeqn}
F_{\Pi_\mathcal{C} \nu_j}(w) = \frac{1}{z_{i+1} - z_i}\int_{z_i}^{z_{i+1}} F_{\nu_j}(t) \calcd t \, , \quad \text{for\ } j=1,2 \, .
\end{align}
Since by assumption we have $F_{\nu_1} \geq F_{\nu_2}$ pointwise, it follows from \eqref{eq:monotoneeqn} that $F_{\Pi_\mathcal{C} \nu_1} \geq F_{\Pi_\mathcal{C} \nu_2}$ pointwise, and therefore $\Pi_\mathcal{C} \nu_1 \leq \Pi_\mathcal{C} \nu_2$, as required. 
\end{proof}

\subsection{Proof of Theorem \ref{thm:SAevaluation}}
\SAevaluation*

The proof structure is based on that of Theorem 2 of \citet{AsynchSAandQLearning}; our Lemmas \ref{lem:ULconvergence} and \ref{lem:sandwich2} are variants of Lemmas 5 and 6 of \citet{AsynchSAandQLearning}.
The high-level argument of the proof proceeds as follows.

Define:
\begin{align*}
    U^{(x, a)}_0 = \delta_{z_K} \, ,& \qquad L^{(x, a)}_0 = \delta_{z_1} \\
    U^{(x, a)}_{k+1} = \frac{1}{2} U^{(x, a)}_k + \frac{1}{2} (\Pi_\mathcal{C} \mathcal{T}^\pi U_k)^{(x, a)} \, ,& \qquad
    L^{(x, a)}_{k+1} = \frac{1}{2} L^{(x, a)}_k + \frac{1}{2} (\Pi_\mathcal{C} \mathcal{T}^\pi L_k)^{(x, a)} \, ,
\end{align*}
iteratively for each $(x, a) \in \mathcal{X} \times \mathcal{A}$.

\begin{restatable}{lemma}{ULconvergence}\label{lem:ULconvergence}
We have $U_{k+1} \leq U_k$, for each $k \in \mathbb{N}_0$, and $L_{k+1} \geq L_k$, for each $k \in \mathbb{N}_0$. Further, we have $U_k \rightarrow \eta_\mathcal{C}$ in  $\overline{\ell}_2$ , and also $L_k \rightarrow \eta_{\mathcal{C}}$ in $\overline{\ell}_2$.
\end{restatable}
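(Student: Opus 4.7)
The plan is to handle the two claims (monotonicity and convergence) separately, leveraging the monotonicity of $\Pi_\mathcal{C} \mathcal{T}^\pi$ (Proposition \ref{prop:monotonemaps}) for the first, and the Hilbert-space structure together with the contractivity of $\Pi_\mathcal{C} \mathcal{T}^\pi$ in $\overline{\ell}_2$ (Proposition \ref{prop:cat-contract}) for the second.

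For monotonicity of $(U_k)_k$, I would proceed by induction on $k$. For the base case, observe that $(\Pi_\mathcal{C} \mathcal{T}^\pi U_0)^{(x,a)}$ is, by construction of the projection operator, supported on $\{z_1, \ldots, z_K\}$, and is therefore stochastically dominated by $\delta_{z_K} = U_0^{(x,a)}$. Since the mixture $U_1^{(x,a)} = \frac{1}{2} U_0^{(x,a)} + \frac{1}{2} (\Pi_\mathcal{C} \mathcal{T}^\pi U_0)^{(x,a)}$ is a convex combination of two distributions each dominated by $U_0^{(x,a)}$, and since convex combinations preserve stochastic dominance (pointwise inequalities of CDFs are preserved under convex combinations), we get $U_1 \leq U_0$ elementwise. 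For the inductive step, suppose $U_{k+1} \leq U_k$. By Proposition \ref{prop:monotonemaps}, $\Pi_\mathcal{C}\mathcal{T}^\pi U_{k+1} \leq \Pi_\mathcal{C}\mathcal{T}^\pi U_k$, and mixing each side with $\frac{1}{2} U_{k+1}$ and $\frac{1}{2} U_k$ respectively (again using that convex combinations preserve the stochastic order) yields $U_{k+2} \leq U_{k+1}$. The argument for $L_k$ is entirely symmetric, using $\delta_{z_1} \leq (\Pi_\mathcal{C} \mathcal{T}^\pi L_0)^{(x,a)}$ as the base case.

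For convergence, the key idea is to show that the averaged operator
\[
S : \eta \mapsto \tfrac{1}{2}\eta + \tfrac{1}{2}\Pi_\mathcal{C}\mathcal{T}^\pi \eta
\]
is itself a contraction in $\overline{\ell}_2$. Because the map $\nu \mapsto \nu - \delta_0$ identifies probability measures (of the relevant class) with an affine subspace of the Hilbert space $\mathcal{M}_0(\mathbb{R})$ from the proof of Proposition \ref{prop:projnonexpansion}, the Cram\'er distance satisfies a Minkowski-type inequality:
\[
\ell_2\bigl(\tfrac{1}{2}\mu_1 + \tfrac{1}{2}\nu_1,\ \tfrac{1}{2}\mu_2 + \tfrac{1}{2}\nu_2\bigr) \leq \tfrac{1}{2}\ell_2(\mu_1, \mu_2) + \tfrac{1}{2}\ell_2(\nu_1, \nu_2).
\]
Applying this pointwise in $(x, a)$, taking the supremum, and using the $\sqrt{\gamma}$-contractivity of $\Pi_\mathcal{C}\mathcal{T}^\pi$ from Proposition \ref{prop:cat-contract}, we obtain
\[
\overline{\ell}_2(S\eta, S\mu) \leq \tfrac{1}{2}\overline{\ell}_2(\eta,\mu) + \tfrac{1}{2}\sqrt{\gamma}\,\overline{\ell}_2(\eta,\mu) = \tfrac{1+\sqrt{\gamma}}{2}\,\overline{\ell}_2(\eta,\mu),
\]
and since $\gamma < 1$ this contraction factor is strictly less than one. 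Noting that $\eta_\mathcal{C}$ is a fixed point of $\Pi_\mathcal{C}\mathcal{T}^\pi$ and hence of $S$, the Banach fixed point theorem applied to $S$ gives $U_k = S^k U_0 \to \eta_\mathcal{C}$ and $L_k = S^k L_0 \to \eta_\mathcal{C}$ in $\overline{\ell}_2$ at geometric rate.

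The only subtle point, which I would flag as the main obstacle, is carefully justifying the base case of monotonicity: one must be explicit that the projection operator $\Pi_\mathcal{C}$ maps any distribution (including $\mathcal{T}^\pi U_0$, whose support may lie outside $[z_1, z_K]$ when rewards are unconstrained) into a distribution on $\{z_1, \ldots, z_K\}$, which is then automatically stochastically dominated by $\delta_{z_K}$ and dominates $\delta_{z_1}$. Once this is laid out, both the order-theoretic induction and the Banach-style contraction argument follow from the results already established in the paper.
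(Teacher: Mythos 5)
Your proof is correct. The monotonicity half is essentially the paper's own argument: the base case via the fact that every distribution supported on $\{z_1,\ldots,z_K\}$ (in particular $(\Pi_\mathcal{C}\mathcal{T}^\pi U_0)^{(x,a)}$) is dominated by $\delta_{z_K}$, and the inductive step via Proposition \ref{prop:monotonemaps} together with preservation of stochastic dominance under mixtures. For the convergence half, however, you take a genuinely different route. The paper proves an order-theoretic compactness result (its Lemma \ref{lem:stocdomcvge}: any stochastically monotone sequence of distributions on $\{z_1,\ldots,z_K\}$ converges in $\ell_2$, shown by pointwise monotone convergence of the CDFs), and then identifies the limit as $\eta_\mathcal{C}$ by passing to the limit in the recursion $U_{k+1}=\tfrac12 U_k+\tfrac12\Pi_\mathcal{C}\mathcal{T}^\pi U_k$ using continuity of $\Pi_\mathcal{C}\mathcal{T}^\pi$. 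You instead observe that the averaged operator $S=\tfrac12\mathrm{Id}+\tfrac12\Pi_\mathcal{C}\mathcal{T}^\pi$ is a $\tfrac{1+\sqrt\gamma}{2}$-contraction in $\overline\ell_2$ (the Minkowski-type inequality is valid since $F_{\frac12\mu+\frac12\nu}=\tfrac12 F_\mu+\tfrac12 F_\nu$ and $\ell_2$ is the $L^2$ norm on CDF differences), and that $\eta_\mathcal{C}$ is a fixed point of $S$, so $\overline\ell_2(S^kU_0,\eta_\mathcal{C})\le\bigl(\tfrac{1+\sqrt\gamma}{2}\bigr)^k\overline\ell_2(U_0,\eta_\mathcal{C})\to 0$. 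One small presentational point: you do not actually need the Banach fixed point theorem (and hence no completeness argument), since existence and uniqueness of the fixed point is already supplied by Proposition \ref{prop:cat-contract}; iterating the contraction inequality toward the known fixed point suffices. Your argument is arguably cleaner and yields an explicit geometric rate, making the convergence of $U_k$ and $L_k$ independent of their monotonicity; the paper's monotone-limit lemma is more robust in that it would survive without contractivity, but requires the extra continuity step to identify the limit.
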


Finally, we argue that, for each $k \in \mathbb{N}_0$, the return  distribution functions $U_k$ and $L_k$ sandwich all but finitely many of the return distribution estimators $\eta_t$, in a sense made precise by the following lemma.

\begin{restatable}{lemma}{sandwichtwo}\label{lem:sandwich2}
Given $k \in \mathbb{N}_0$, there exists a random time $T_k$ taking values in $\mathbb{N}_0$ such that
\[
L_{k} \leq \eta_t \leq U_k \ \ \text{\ for\ all\ } t > T_k, \text{\ almost\ surely}.
\]
\end{restatable}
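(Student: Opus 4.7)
The plan is to prove the lemma by induction on $k$, in the spirit of Lemma 6 of \citet{AsynchSAandQLearning} but carried out in the partially ordered space $(\returndists, \leq)$ using the monotonicity machinery of Proposition \ref{prop:monotonemaps}. For the base case $k=0$, assumption \ref{cond:boundedinitial} together with the observation that the projected target $\widehat{\eta}_t = \Pi_\mathcal{C}\widehat{\eta}_*$ in Algorithm \ref{alg:mixture} is always supported on $\{z_1, \ldots, z_K\}$ implies inductively (on $t$) that every iterate $\eta_t^{(x,a)}$ remains supported in $[z_1, z_K]$; hence $L_0 = \delta_{z_1} \leq \eta_t \leq \delta_{z_K} = U_0$ for all $t$, so $T_0 = 0$ works.

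For the inductive step I treat the upper bound $\eta_t \leq U_{k+1}$; the argument for the lower bound is entirely symmetric. Given $T_k$ from the inductive hypothesis, I introduce an auxiliary process $(Y_t)_{t \geq T_k}$, coupled to $(\eta_t)$ by using the same sampled transitions, via $Y_{T_k} = U_k$ and
\[
Y_{t+1}^{(x_t, a_t)} = (1 - \alpha_t(x_t, a_t)) Y_t^{(x_t, a_t)} + \alpha_t(x_t, a_t)\,\Pi_\mathcal{C}(f_{r_t, \gamma})_\# U_k^{(x_{t+1}, a^*)},
\]
with the remaining components left unchanged. Since both $(f_{r,\gamma})_\#$ and $\Pi_\mathcal{C}$ are monotone with respect to stochastic dominance (Proposition \ref{prop:monotonemaps}), and $\eta_t \leq U_k$ for $t > T_k$ by hypothesis, a single-step comparison of the mixture updates yields $\eta_{t+1} \leq Y_{t+1}$ given $\eta_t \leq Y_t$, so $\eta_t \leq Y_t$ element-wise for all $t \geq T_k$ by induction on $t$.

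The next step is to show $Y_t \to V := \Pi_\mathcal{C} \mathcal{T}^\pi U_k$ almost surely. An easy induction shows each $U_k$ is supported on $\{z_1, \ldots, z_K\}$, hence so is $Y_t$, and $Y_t$ is determined by the finite collection of CDF values $F_{Y_t^{(x, a)}}(z_i)$. Each of these is governed by an asynchronous scalar Robbins-Monro recursion whose bounded driving term $F_{\Pi_\mathcal{C}(f_{r_t, \gamma})_\# U_k^{(x_{t+1}, a^*)}}(z_i)$ has conditional mean $F_V^{(x, a)}(z_i)$ — precisely the mean-zero property recorded in Lemma \ref{lem:signedmeasure} — so under condition \ref{cond:RM} standard asynchronous stochastic-approximation theorems yield $F_{Y_t^{(x, a)}}(z_i) \to F_V^{(x, a)}(z_i)$ a.s. To conclude the sandwich, note that $F_{U_{k+1}^{(x, a)}}(z_i) = \tfrac12(F_{U_k^{(x, a)}}(z_i) + F_V^{(x, a)}(z_i))$: at atoms where $F_V^{(x, a)}(z_i) = F_{U_k^{(x, a)}}(z_i)$ the required inequality $F_{\eta_t^{(x, a)}}(z_i) \geq F_{U_{k+1}^{(x, a)}}(z_i)$ follows directly from the inductive hypothesis, while at atoms where $F_V^{(x, a)}(z_i) > F_{U_k^{(x, a)}}(z_i)$ strictly, the target $F_{U_{k+1}^{(x, a)}}(z_i)$ is strictly less than $F_V^{(x, a)}(z_i)$, so $F_{Y_t^{(x, a)}}(z_i) \geq F_{U_{k+1}^{(x, a)}}(z_i)$ for all $t$ past a random time, and $\eta_t \leq Y_t$ transfers the bound to $\eta_t$. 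Because $U_{k+1}$ is supported on finitely many atoms and only finitely many triples $(x, a, i)$ arise, taking $T_{k+1}$ to be the maximum of these random times closes the induction.

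The main obstacle I anticipate is a careful invocation of an asynchronous Robbins-Monro result for the CDF-valued recursion with its signed-measure noise; this is precisely where Lemma \ref{lem:signedmeasure} is needed, and the mild subtlety at atoms where $F_V$ touches $F_{U_k}$ is avoided by handing those points over to the inductive hypothesis rather than to the stochastic-approximation limit.
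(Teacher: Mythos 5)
Your proof is correct and follows the same overall architecture as the paper's: induction on $k$, a coupled auxiliary process initialised at $U_k$, monotonicity of the projected Bellman backup under stochastic dominance, a Robbins--Monro argument for the noise, and a separate treatment of the atoms where $(\Pi_\mathcal{C}\mathcal{T}^\pi U_k)^{(x,a)}$ and $U_k^{(x,a)}$ agree (your hand-off to the inductive hypothesis at those atoms is exactly the role played by the paper's minimal nonzero gap $\delta$). The one structural difference is in how the comparison process is built. The paper splits it as $H_t + W_t$, where $H_t$ is a deterministic convex-combination recursion converging to $\Pi_\mathcal{C}\mathcal{T}^\pi U_k$ and $W_t$ accumulates the signed-measure noise of the \emph{actual} update (the backup of $\eta_t$); the domination $\eta_t \leq H_t + W_t$ then involves reasoning about signed measures, and $W_t \to 0$ is the stochastic-approximation step. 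You instead use a single probability-measure-valued process $Y_t$ driven by the stochastic backups of the \emph{fixed} function $U_k$, obtain $\eta_t \leq Y_t$ by a direct monotone comparison of the two coupled recursions, and apply stochastic approximation to the finitely many CDF values $F_{Y_t^{(x,a)}}(z_i)$. Your variant stays entirely within the cone of probability measures and makes the dominance argument slightly cleaner, at the cost of needing the mean-zero property of Lemma \ref{lem:signedmeasure} for backups of $U_k$ rather than of $\eta_t$ (a trivial extension, since $U_k$ is deterministic and $\Pi_\mathcal{C}$ is affine). Two cosmetic points: the monotonicity of $(f_{r,\gamma})_\#$ is used but not literally stated in Proposition \ref{prop:monotonemaps} (it is immediate for $\gamma>0$), and the coupling should start at $T_k+1$ if the inductive hypothesis is phrased with strict inequality $t > T_k$. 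Neither affects correctness.
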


Now, from Lemma \ref{lem:sandwich2} the conclusion of Theorem \ref{thm:SAevaluation} is reached as follows. Let $\varepsilon > 0$, and pick $k \in \mathbb{N}_0$ sufficiently large so that $\overline{\ell}_2(L_k, \eta_\mathcal{C}), \overline{\ell}_2(U_k, \eta_\mathcal{C}) < \varepsilon$, which can be done by Lemma \ref{lem:ULconvergence}. Note then by the triangle inequality that $\overline{\ell}_2(U_k, L_k) < 2\varepsilon$, and further, we have:
\begin{align*}
\overline{\ell_2}(\eta_t, \eta_\mathcal{C}) \leq \overline{\ell}_2(\eta_t, L_k) + \overline{\ell}_2(L_k, U_k) + \overline{\ell}_2(U_k, \eta_\mathcal{C})\, .    
\end{align*}
Since, by Lemma \ref{lem:sandwich2}, we have that $L_k \leq \eta_t \leq U_k$ for all $t > T_k$ almost surely, it follows that $\overline{\ell}_2(\eta_t, L_k) \leq \overline{\ell}_2(L_k, U_k)$ for all $t > T_k$ almost surely, and so we obtain
\begin{align*}
    \overline{\ell_2}(\eta_t, \eta_\mathcal{C}) \leq 2\overline{\ell}_2(L_k, U_k) + \overline{\ell}_2(U_k, \eta_\mathcal{C}) < 5\varepsilon \text{\ for\ all\ } t > T_k \text{\ almost\ surely}\, ,
\end{align*}
which yields the statement of Theorem \ref{thm:SAevaluation}. It now remains to establish Lemmas \ref{lem:ULconvergence} and \ref{lem:sandwich2}.

\subsection{Proof of Lemma \ref{lem:ULconvergence}}
We firstly show that $U_{k+1} \leq U_k$ for each $k \in \mathbb{N}_0$. The proof that $L_{k+1} \geq L_k$ for each $k \in \mathbb{N}_0$ is entirely analogous.

First, observe that $U_1 \leq U_0$, since each distribution $U_1^{(x, a)}$ is supported on $[z_1, z_K]$, and $U_0^{(x, a)}$ was chosen to stochastically dominate all distributions supported on $[z_1, z_K]$. For the inductive step, suppose $U_{k+1} \leq U_k$ for some $k \in \mathbb{N}_0$. Then by monotonicity of $\Pi_\mathcal{C} \mathcal{T}^\pi$, we have $\Pi_\mathcal{C} \mathcal{T}^\pi U_{k+1} \leq \Pi_\mathcal{C} \mathcal{T}^\pi U_k$. Hence,
\[
U^{(x, a)}_{k+2} = \frac{1}{2}U^{(x, a)}_{k+1} + \frac{1}{2}(\Pi_\mathcal{C} \mathcal{T}^\pi U_{k+1})^{(x, a)} \leq \frac{1}{2} U^{(x, a)}_k +  \frac{1}{2}(\Pi_\mathcal{C} \mathcal{T}^\pi U_{k})^{(x, a)} = U^{(x, a)}_{k+1}   \, ,
\]
which completes the inductive proof. To establish convergence of $U_k$ to $\eta_\mathcal{C}$, we make use of the following general result.

\begin{lemma}\label{lem:stocdomcvge}
Let $(\nu_k)_{k=0}^\infty$ be a sequence of probability measures over $\{z_1, \ldots, z_K\}$, with the property that $\nu_{k+1} \leq \nu_k$ for each $k \in \mathbb{N}_0$. Then there exists a probability measure $\nu^*$ over $\{z_1, \ldots, z_K\}$ such that $\nu_k \rightarrow \nu^*$ in $\ell_2$.
\end{lemma}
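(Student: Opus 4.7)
The plan is to exploit the finite support to reduce convergence of measures to convergence of finitely many real sequences, and then use monotone convergence.

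First, I would identify each $\nu_k$ with its CDF values at the grid points, writing $c_k^{(i)} = F_{\nu_k}(z_i)$ for $i = 1, \ldots, K$. Since $\nu_{k+1} \leq \nu_k$ is equivalent (by the CDF characterization of stochastic dominance given earlier) to $F_{\nu_{k+1}} \geq F_{\nu_k}$ pointwise, the real sequence $(c_k^{(i)})_{k \geq 0}$ is non-decreasing for each fixed $i$. Moreover it is bounded above by $1$, so it converges to some limit $c_\infty^{(i)} \in [0, 1]$ by the monotone convergence theorem for real sequences. Note also $c_k^{(K)} = 1$ for every $k$, so $c_\infty^{(K)} = 1$, and the monotonicity of $F_{\nu_k}$ in $i$ is preserved in the limit, so $c_\infty^{(1)} \leq \cdots \leq c_\infty^{(K)} = 1$.

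Next I would define $\nu^*$ as the probability measure on $\{z_1, \ldots, z_K\}$ determined by $F_{\nu^*}(z_i) = c_\infty^{(i)}$; equivalently, it assigns mass $c_\infty^{(i)} - c_\infty^{(i-1)}$ to $z_i$ (with $c_\infty^{(0)} := 0$), and this is a valid probability distribution by the properties of $c_\infty^{(i)}$ just established.

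For the convergence in $\ell_2$, the point is that the CDFs of $\nu_k$ and $\nu^*$ are both step functions constant on each interval $[z_i, z_{i+1})$ (and agreeing outside $[z_1, z_K]$). Hence
\begin{align*}
\ell_2^2(\nu_k, \nu^*) = \int_{\mathbb{R}} (F_{\nu_k}(x) - F_{\nu^*}(x))^2 \calcd x = \sum_{i=1}^{K-1}(z_{i+1} - z_i)(c_k^{(i)} - c_\infty^{(i)})^2,
\end{align*}
which is a finite sum of terms each tending to $0$ by the pointwise convergence of $c_k^{(i)}$. This yields $\ell_2(\nu_k, \nu^*) \to 0$ as required.

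There is no real obstacle here: the finiteness of the support collapses the problem to coordinatewise monotone convergence in $\mathbb{R}^K$, and the piecewise-constant structure of the CDFs makes the $\ell_2$ distance an explicit finite sum, so the only thing to be careful about is checking that the limit vector still defines a valid probability measure (monotone, with $c_\infty^{(K)} = 1$), which follows immediately from taking limits in the relevant inequalities.
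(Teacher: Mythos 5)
Your proof is correct and follows essentially the same route as the paper's: pointwise monotone convergence of the CDF values at the grid points, then exploiting the piecewise-constant structure of the CDFs to control the $\ell_2$ distance. Your final step is in fact slightly cleaner, since you evaluate $\ell_2^2(\nu_k,\nu^*)$ exactly as the finite sum $\sum_{i=1}^{K-1}(z_{i+1}-z_i)(c_k^{(i)}-c_\infty^{(i)})^2$, whereas the paper bounds the integral and argues via a monotone-sequence/no-positive-lower-bound argument.
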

\begin{proof}
We work with CDFs. Denote the CDF of $\nu_k$ by $F_k$, for $k \in \mathbb{N}_0$. Recall that the stochastic dominance condition $\nu_{k+1} \leq \nu_k$ implies that $F_{k+1} \geq F_k$ pointwise. Therefore for each $x \in \mathbb{R}$, we have that $(F_k(x))_{k \in \mathbb{N}_0}$ is an increasing sequence, trivially upper-bounded by $1$. Therefore the sequence converges, and so there exists a limit function $F : \mathbb{R} \rightarrow \mathbb{R}$, defined by $F^*(x) = \lim_{k \rightarrow \infty} F_k(x)$. It is straightforward to see that this limit function takes values in $[0,1]$, is non-decreasing, right-continuous and is constant away from the set $\{z_1, \ldots, z_K\}$. It is therefore the CDF of a probability distribution $\nu^*$ supported on $\{z_1, \ldots, z_K\}$. Since $\widetilde{F}^*$ is constant away from $\{z_1,\ldots,z_K\}$, $\nu^*$ is supported on $\{z_1,\ldots,z_K\}$. To show that $\nu_k \rightarrow \nu^*$ in $\ell_2$, we must establish that $\int_\mathbb{R} (F_k(x) - F^*(x))^2 \calcd x \rightarrow 0$. Since $ \nu^* \leq \nu_{k+1} \leq \nu_k$ for each $k \in \mathbb{N}_0$, it follows that $\int_\mathbb{R} (F_k(x) - F^*(x))^2 \calcd x$ is a non-increasing sequence, and so it suffices to show that it is not lower-bounded by a positive number to establish the sequence's convergence to $0$. To that end, let $\varepsilon > 0$. Pick $k \in \mathbb{N}_0$ such that $|F_k(z_i) - F^*(z_i)| < \varepsilon$, for each $i=1,\ldots, K-1$. Then observe that
\[
\int_\mathbb{R} (F_k(x) - F^*(x))^2 \calcd x \leq \sum_{i=1}^{K-1} (z_{i+1} - z_i) \varepsilon^2 \, ,
\]
which demonstrates that no positive lower bounded exists, as required.
\end{proof}

Applying Lemma \ref{lem:stocdomcvge} to each of the sequences $(U_k^{(x, a)})_{k =0 }^\infty$, for each state-action pair $(x, a) \in \mathcal{X} \times \mathcal{A}$, we obtain the convergence of $(U_k)_{k=0}^\infty$ to some set of return distributions $\eta^*$ in $\overline{\ell}_2$. Finally, due to the continuity of $\Pi_\mathcal{C} \mathcal{T}^\pi$ with respect to $\overline{\ell}_2$, this limiting set of return distributions $\eta^*$ must satisfy $\eta^* = \frac{1}{2} \eta^* + \frac{1}{2} \Pi_\mathcal{C} \mathcal{T}^\pi \eta^*$, implying that $\eta^* = \Pi_\mathcal{C} \mathcal{T}^\pi \eta^*$, so the limiting set of return distributions is indeed the fixed point $\eta_\mathcal{C}$ of $\Pi_\mathcal{C} \mathcal{T}^\pi$. Analogously, we may show that $L_k \rightarrow \eta_\mathcal{C}$ in $\overline{\ell}_2$.

\subsection{Proof of Lemma \ref{lem:sandwich2}}
We prove this lemma by induction.
The result is clear for $k=0$, as in this case $U_0^{(x, a)}$ stochastically dominates all distributions supported on $[z_1,z_K]$, and $L_0^{(x, a)}$ is stochastically dominated by all distributions supported on $[z_1, z_K]$.
Now assume the result holds for some $k\geq0$; that is, there exists some random time $T_k$ such that $L_k \leq \eta_t \leq U_k$ for all $t \geq T_k$ almost surely. Here, we follow the structure of the proof of Lemma 6 of \citep{AsynchSAandQLearning} closely. We will show there exists a random time $T_{k+1}$ such that $\eta_t \leq U_{k+1}$ for all $t \geq T_{k+1}$ almost surely; the claim that $L_{k+1} \leq \eta_t$ for all $t \geq T_{k+1}$ may be proven analogously.

Now define 
\begin{align}\label{eq:HandWdef}
    H_{T_k}^{(x, a)} = U^{(x, a)}_k &\, , \ H^{(x, a)}_{t+1} = (1 - \alpha_t(x, a)) H^{(x, a)}_t + \alpha_t(x, a) (\Pi_\mathcal{C} \mathcal{T}^\pi U_k)^{(x, a)} \, , \text{\ for\ } t \geq T_k  \\
    W_{T_k}^{(x, a)} = 0 \in \mathcal{M}(\mathbb{R}) &\, , \ W_{t+1}^{(x, a)} = (1 - \alpha_t(x, a)) W_t^{(x, a)} + \alpha_t(x, a) \left\lbrack (\Pi_\mathcal{C}(f_{r, \gamma})_\# \eta_t)^{(x^\prime, a^\prime)} - (\Pi_\mathcal{C} \mathcal{T}^\pi \eta_t)^{(x, a)} \right\rbrack , \text{for\ } t \geq T_k \, , \nonumber
\end{align}
where $\mathcal{M}(\mathbb{R})$ is the space of signed measures on $\mathbb{R}$, and $0 \in \mathcal{M}(\mathbb{R})$ represents the zero measure; that is, the signed measure that assigns measure $0$ to every Borel subset of $\mathbb{R}$. Note that the process $(W_t)_{t \geq T_k}$ takes values in the space of collections of finite signed measures indexed by state-action pairs, each with overall mass $0$; that is, $W^{(x, a)}_t(\mathbb{R}) = 0$ for all $(x, a) \in \mathcal{X} \times \mathcal{A}$, for all $t \geq T_k$.

We now argue that $\eta_t^{(x, a)} \leq H^{(x, a)}_t + W_t^{(x, a)}$ for all $t \geq T_k$ and for all $(x, a) \in \mathcal{X} \times \mathcal{A}$ almost surely. For $t=T_k$, this following from the definitions in \eqref{eq:HandWdef} and the dominance relation $\eta_{T_k} \leq U_k$. To complete the proof, we proceed inductively. Suppose that $\eta_t^{(x, a)} \leq H^{(x, a)}_t + W_t^{(x, a)}$ for all $(x, a) \in \mathcal{X} \times \mathcal{A}$, for some $t \geq T_k$. Then note, assuming $\alpha_t(x, a)=0$ if the distribution corresponding to the state-action pair $(x, a)$ is not updated at time $t$, we have
\begin{align*}
\eta_{t+1}^{(x, a)} = & (1 - \alpha_t(x ,a)) \eta_t^{(x, a)} + \alpha_t(x, a) \Pi_\mathcal{C} (f_{r, \gamma})_\# \eta_t^{(x^\prime, a^\prime)} \\
= &(1 - \alpha_t(x ,a)) \eta_t^{(x, a)} + \alpha_t(x, a) (\Pi_\mathcal{C} \mathcal{T}^\pi \eta_t)^{(x, a)} + \alpha_t(x,a) (\Pi_\mathcal{C}(f_{r, \gamma})_\# \eta_t^{(x^\prime, a^\prime)} - (\Pi_\mathcal{C} \mathcal{T}^\pi \eta_t)^{(x, a)}) \\
\overset{(i)}{\leq} &(1 - \alpha_t(x ,a)) (H_t^{(x, a)} + W_t^{(x, a)}) + \alpha_t(x, a) (\Pi_\mathcal{C} \mathcal{T}^\pi U_k)^{(x, a)} + \alpha_t(x,a) (\Pi_\mathcal{C}(f_{r, \gamma})_\# \eta_t^{(x^\prime, a^\prime)} - (\Pi_\mathcal{C} \mathcal{T}^\pi \eta_t)^{(x, a)}) \\
= &(1 - \alpha_t(x ,a)) H_t^{(x, a)} + \alpha_t(x, a) (\Pi_\mathcal{C} \mathcal{T}^\pi U_k)^{(x, a)} +(1- \alpha_t(x, a)) W_t^{(x, a)} \\
& \qquad\qquad\qquad\qquad\qquad\qquad\qquad\qquad\qquad\qquad\qquad+ \alpha_t(x,a) (\Pi_\mathcal{C}(f_{r, \gamma})_\# \eta_t^{(x^\prime, a^\prime)} - (\Pi_\mathcal{C} \mathcal{T}^\pi \eta_t)^{(x, a)}) \\
= & H_{t+1}^{(x, a)} + W_{t+1}^{(x, a)}\, , 
\end{align*}
as required. In the above derivation, (i) comes from the stochastic dominance relations $\eta_t \leq H_t + W_t$ (by induction hypothesis) and $\eta_t \leq U_k$ and the monotonicity of $\Pi_\mathcal{C} \mathcal{T}^\pi$. Note that we have the following expression for $H^{(x, a)}_t$:
\[
H_t^{(x, a)} = \left(\prod_{\tau = T_k}^{t-1} (1 - \alpha_\tau(x, a))\right) U_k  + \left( 1- \prod_{\tau = T_k}^{t-1} (1 - \alpha_\tau(x, a))\right) (\Pi_\mathcal{C} \mathcal{T}^\pi U_k)^{(x, a)}
\]
Since by assumption we have $\sum_{k=0}^\infty \alpha_k(x, a) = \infty$ for all $(x, a) \in \mathcal{X} \times \mathcal{A}$ almost surely, we have that there exists a random time $\widetilde{T}_{k+1}$ such that $\prod_{\tau = T_k}^{t-1} (1 - \alpha_\tau(x, a)) \leq 1/4$ for all $(x, a) \in \mathcal{X} \times \mathcal{A}$, and for all $t \geq \widetilde{T}_{k+1}$ almost surely. Since $\Pi_\mathcal{C} \mathcal{T}^\pi U_k \leq U_k$, for all $t \geq \widetilde{T}_k$, we have:
\begin{align}
\eta_{t} & \leq H_t + W_t \nonumber \\
& \leq \frac{1}{4} U_k + \frac{3}{4} \Pi_\mathcal{C} \mathcal{T}^\pi U_k + W_t \nonumber \\
& = \frac{1}{2}U_k + \frac{1}{2} \Pi_\mathcal{C} \mathcal{T}^\pi U_k + W_t - \frac{1}{4}( U_k - \Pi_\mathcal{C} \mathcal{T}^\pi U_k) \nonumber \\
& = U_{k+1} + W_t - \frac{1}{4}( U_k - \Pi_\mathcal{C} \mathcal{T}^\pi U_k) \label{eq:eta_U_W_ineq} \, .
\end{align}
Now note that if $U^{(x, a)}_k((\infty, z_i]) = \Pi_\mathcal{C} \mathcal{T}^\pi U_k^{(x, a)}((\infty, z_i])$, then we have $U_{k+1}^{(x, a)}((-\infty, z_i]) = U_k^{(x, a)}((-\infty, z_i])$. Let $\delta$, then, be the smallest non-zero value of $|(\Pi_\mathcal{C} \mathcal{T}^\pi U_{k})^{(x, a)}((-\infty, z_i]) - U_k^{(x, a)}((-\infty, z_i])|$ across all state-action pairs $(x, a) \in \mathcal{X} \times \mathcal{A}$ and all support points $z_i \in \{z_1, \ldots, z_K\}$.
Crucially, we observe that the additive ``noise" term appearing in the definition of $W_{t+1}^{(x, a)}$ in Equation \eqref{eq:HandWdef} is mean-zero, in the following sense: as a random measure, the expectation of the noise term is the $0$ measure. More concretely for our purposes, we have, as stated in Lemma \ref{lem:signedmeasure} in the main paper, for all $z_i \in \{z_1, \ldots, z_K\}$:
\begin{align*}
    \mathbb{E}_{r, x^\prime, a^\prime}\left\lbrack ((\Pi_\mathcal{C}(f_{r, \gamma})_\# \eta_t)^{(x^\prime, a^\prime)} - (\Pi_\mathcal{C} \mathcal{T}^\pi \eta_t)^{(x, a)}) \right\rbrack((-\infty, z_i]) = 0 \, .
\end{align*}
Standard stochastic approximation theory (e.g. \citep{AsynchSAandQLearning}), via Assumption (i), then yields that $W_t^{(x, a)}((-\infty, z_i]) \rightarrow 0$ almost surely, for all $(x, a) \in \mathcal{X} \times \mathcal{A}$, and for all $z_i \in \{z_1, \dots, z_K\}$. We can now take $T_{k+1} > \widetilde{T}_{k+1}$ sufficiently large so that $|W_t^{(x, a)}((-\infty, z_i]))| < \delta/4$ for all $t \geq T_{k+1}$ and all $(x, a) \in \mathcal{X} \times \mathcal{A}$. Then \eqref{eq:eta_U_W_ineq} yields that $\eta_t \leq U_{k+1}$ for all $t \geq T_{k+1}$, completing the inductive step, and therefore completing the proof of Lemma \ref{lem:sandwich2}.

\subsection{Proof of Theorem \ref{thm:SAcontrol}}

\SAcontrol*
\begin{proof}
We first note that the updates induced by the algorithm on the \textit{expected returns} are exactly those of standard (non-distributional) Q-learning. More precisely, denoting the expected returns $\mathbb{E}_{R \sim \eta_t^{(x, a)}}[R]$ at state-action pair $(x, a) \in \mathcal{X} \times \mathcal{A}$ at time $t$ by $Q_t(x, a)$, we have that these Q-values follow the standard dynamics of Q-learning. This holds because the maximum and minimum possible estimated rewards lie within the support of the parametrised distributions, by the assumptions of the theorem. We may therefore apply the non-distributional theory \citep{AsynchSAandQLearning} to argue that the expectations $(Q_t(x, a) | (x, a) \in \mathcal{X} \times \mathcal{A})$ converge almost-surely to the true optimal expected returns $(Q^{\pi^*}(x,a) | (x,a) \in \mathcal{X} \times\mathcal{A})$. Since the state space and action space are finite, this convergence is almost-surely uniform across all state-action pairs. Therefore, given $\varepsilon > 0$, there exists a random variable $N$ such that for $t > N$, we have
\[
\sup_{(x, a) \in \mathcal{X} \times \mathcal{A}} | Q_t(x, a) - Q^{\pi^*}(x, a) | < \varepsilon \qquad \text{almost surely} \, .
\]
Now take $\varepsilon$ to be equal to half the minimum action gap across all states for the optimal action-value function $Q^{\pi^*}$; that is, take $\varepsilon = \frac{1}{2} \min_{x \in \mathcal{X}} [Q^{\pi^*}(x, \pi^*(x)) - \max_{a \not= \pi^*(x)} Q^{\pi^*}(x, a)]$ (which is greater than zero by the assumption of a unique optimal policy and finite state and action spaces). Then for $t > N$, the Q-learning updates are exactly the same as policy evaluation updates for the optimal policy $\pi^*$. Under these updates, we proved in Theorem \ref{thm:SAevaluation} that the return distributions converge to the approximate return distribution function $\eta_\mathcal{C}$. Note however, that $N$ is not a stopping time; we must be particularly careful with the analysis that follows.

We therefore proceed according to a coupling argument. We define the following set of independent stochastic distributional Bellman operators: $(\widehat{\mathcal{T}}^{\pi}_t)$ across all deterministic policies $\pi$, and timesteps $t \in \mathbb{N}$. The idea is to define a $\pi^*$ categorical policy evaluation algorithm with these operators, and also a categorical Q-learning algorithm, and couple these processes together with probability tending to $1$ as the number of steps of each algorithm increases. Since the return distribution ensemble computed by the policy evaluation algorithm will converge to the approximate return distribution function $\eta_\mathcal{C}$ associated with $\pi^*$ almost surely, we will then be able to argue that the same is true of the distributions computed by the Q-learning algorithm.

More precisely, we first construct the $\pi^*$ categorical policy evaluation algorithm by taking an initial return distribution function $(\eta_0^{(x,a)} | (x,a) \in \mathcal{X} \times \mathcal{A})$, and defining:
\[
\eta_{k+1} =\Pi_\mathcal{C} \widehat{\mathcal{T}}_k^{\pi^*} \eta_k \, ,
\]
for each $k \geq 0$. We construct the Q-learning algorithm by taking the same initial return distribution function $(\eta_0^{(x, a)} | (x,a) \in \mathcal{X} \times \mathcal{A})$, and defining the following updates, letting $\widetilde{\eta}_0 = \eta_0$:
\begin{align*}
&\text{Let } \pi_k \text{ be greedy wrt } \widetilde{\eta}_k \, , \\
&\widetilde{\eta}_{k+1} = \Pi_\mathcal{C} \widehat{\mathcal{T}}_k^{\pi_k} \widetilde{\eta}_k \, , 
\end{align*}
for each $k \geq 0$.

By the remarks above, we have $\pi_k = \pi^*$ for all $k$ sufficiently large almost surely. Let $A_k = \{ \pi_l = \pi^* \text{ for all } l \geq k \}$, for each $k \in \mathbb{N}$. Then $A_{k} \subseteq A_{k+1}$, and $\mathbb{P}(A_k) \uparrow 1$. Let $B$ be the event of probability $1$ for which the policy evaluation algorithm converges. Now, on the event $B \cap A_k$, we have
\[
\overline{\ell}_2^2(\eta_l, \eta_\mathcal{C}) \rightarrow 0 \, ,
\]
where $\eta_\mathcal{C}$ is the limiting distribution function for the policy $\pi^*$, as in Theorem \ref{thm:SAevaluation}.
Note then that if $\overline{\ell}_2^2(\widetilde{\eta}_l, \eta_l) \rightarrow 0$ on this event too, then by the triangle inequality, we have $\overline{\ell}_2(\widetilde{\eta}_l, \eta_\mathcal{C}) \rightarrow 0$, and hence Q-learning converges on $A_k \cap B$, and since $\mathbb{P}(A_k \cap B) \uparrow 1$, the statement of the theorem immediately follows. We first observe that 
$\overline{\ell}_2^2(\widetilde{\eta}_l, \eta_l)$, for $l \geq k$, is eventually a non-increasing positive sequence on the event $A_k$:
\begin{align}
    &\ell_2^2(\widetilde{\eta}_{l+1}^{(x, a)}, \eta_{l+1}^{(x, a)}) \nonumber
    \\
    = & \left\|\left( (1-\alpha_l(x,a))\widetilde{\eta}_l^{(x, a)} + \alpha_l(x, a)( \Pi_\mathcal{C} \widehat{\mathcal{T}}^{\pi^*}_l \widetilde{\eta}_l)^{(x,a)} \right)- \left((1-\alpha_l(x,a)) \eta_l^{(x, a)} + \alpha_l(x, a) (\Pi_\mathcal{C} \widehat{\mathcal{T}}^{\pi^*}_l \eta_l)^{(x,a)}\right) \right\|_{\ell_2}^2 \nonumber \\
    = & (1 - \alpha_l(x, a))^2 \left\|\widetilde{\eta}_l^{(x, a)} - \eta_l^{(x, a)}\right\|_{\ell_2}^2 +
    \alpha_l(x, a)^2 \left\| (\Pi_\mathcal{C} \widehat{\mathcal{T}}^{\pi^*}_l \widetilde{\eta}_l)^{(x,a)} - (\Pi_\mathcal{C} \widehat{\mathcal{T}}^{\pi^*}_l \eta_l)^{(x,a)}  \right\|_{\ell_2}^2 \nonumber \\
     & \qquad\qquad\qquad\qquad\qquad\qquad + 2\alpha_l(x, a)(1 - \alpha_l(x, a)) \langle\widetilde{\eta}_l^{(x, a)} - \eta_l^{(x, a)}, (\Pi_\mathcal{C} \widehat{\mathcal{T}}^{\pi^*}_l \widetilde{\eta}_l)^{(x,a)} - (\Pi_\mathcal{C} \widehat{\mathcal{T}}^{\pi^*}_l \eta_l)^{(x,a)}  \rangle_{\ell_2} \nonumber \\
    \leq & (1- \alpha_l(x, a))^2 \overline{\ell}_2^2(\widetilde{\eta}_{l}, \eta_{l}) + \alpha_l(x, a)^2 \gamma \overline{\ell}_2^2(\widetilde{\eta}_{l}, \eta_{l}) + 
    2\alpha_l(x, a) (1 - \alpha_l(x, a)) \sqrt{\gamma} \overline{\ell}_2^2(\widetilde{\eta}_{l}, \eta_{l}) \nonumber \\
    = & (1 - \alpha_l(x, a)(1 - \sqrt{\gamma}))^2 \overline{\ell}_2^2(\widetilde{\eta}_{l}, \eta_{l}) \label{eq:bound} \, .
\end{align}
Therefore, on this event, $\overline{\ell}_2(\widetilde{\eta}_l, \eta_l)$ has a limit almost surely. Denote $Z$ as the limit of the sequence, and on the event that $Z > 0$, pick $\delta > 0$ such that $\sqrt{\gamma}(Z + \delta) < Z$.
Letting $\tau > 0$ such that $\overline{\ell}_2(\widetilde{\eta}_l, \eta_l) < Z + \delta$ for all $l \geq \tau$, we observe that for $l \geq \tau$, following the calculations in Equation \eqref{eq:bound}, we obtain the inequality
\begin{align*}
\ell_2^2(\widetilde{\eta}_{l+1}^{(x, a)}, \eta_{l+1}^{(x, a)}) & \leq (1 - \alpha_l(x, a))^2 \ell_2^2(\widetilde{\eta}_{l}^{(x, a)}, \eta_{l}^{(x, a)}) + \alpha_l(x, a)^2 \gamma (Z + \delta) + 2\alpha_l(x, a) (1 - \alpha_l(x, a)) \sqrt{\gamma} (Z + \delta) \\
& \leq (1 - 2\alpha_l(x, a) + \alpha_l(x, a)^2) \ell_2^2(\widetilde{\eta}_{l}^{(x, a)}, \eta_{l}^{(x, a)}) + (2\alpha_l(x, a) - \alpha_l(x, a)^2)\sqrt{\gamma}(Z + \delta) \, .
\end{align*}
By Assumption \ref{cond:RM} of the theorem, we have $\lim \sup_l \ell_2(\widetilde{\eta}_{l}^{(x, a)}, \eta^{(x, a)}_{l}) \leq \sqrt{\gamma}(Z+\delta) < Z$ for all $(x,a ) \in \mathcal{X} \times \mathcal{A}$, a contradiction. Therefore $\overline{\ell}_2^2(\widetilde{\eta}_l, \eta_l) \rightarrow 0$ holds on $A_k \cap B$ almost surely, as required.
\end{proof}

\end{document}